\documentclass{article}

\usepackage{microtype}
\usepackage{graphicx}
\usepackage{subcaption}
\usepackage{booktabs} 
\usepackage{tikz}
\usepackage{enumitem}
\usepackage{tcolorbox}
\usepackage{xcolor}
\usepackage{tikz}
\newcommand{\inner}[2]{\left\langle #1,\ #2 \right\rangle}
\usetikzlibrary{positioning, calc}
\definecolor{codeblue}{RGB}{230,241,251}
\definecolor{codetext}{RGB}{4,44,83}
\definecolor{theorylav}{RGB}{238,237,254}
\definecolor{theorytext}{RGB}{38,33,92}
\definecolor{boxpink}{RGB}{255,228,235}
\tcbuselibrary{breakable}

\newcommand{\Real}{\mathbb{R}}
\newcommand{\Complex}{\mathbb{C}}
\newcommand{\Integer}{\mathbb{Z}}
\usepackage{xcolor}
\usepackage{listings}
\usepackage{mathtools}

\DeclarePairedDelimiter{\norm}{\lVert}{\rVert}
\DeclarePairedDelimiter{\abs}{\lvert}{\rvert}
\newcommand{\PSD}{\succeq 0}
\newcommand{\Kstar}{K^{*}}
\DeclareMathOperator{\svec}{svec}
\DeclareMathOperator{\herm}{Herm}
\DeclareMathOperator{\tr}{tr}

\usetikzlibrary{arrows.meta, positioning, shapes.geometric, calc, fit, backgrounds}

\usepackage{hyperref}

\usepackage[accepted]{icml2026}

\usepackage{amsmath}
\usepackage{amssymb}
\usepackage{amsthm}

\usepackage[capitalize,noabbrev]{cleveref}

\theoremstyle{plain}
\newtheorem{theorem}{Theorem}[section]
\newtheorem{proposition}[theorem]{Proposition}
\newtheorem{lemma}[theorem]{Lemma}

\theoremstyle{definition}
\newtheorem{definition}[theorem]{Definition}

\theoremstyle{remark}
\newtheorem{remark}[theorem]{Remark}

\usepackage[textsize=tiny]{todonotes}

\icmltitlerunning{AI-Assisted Discovery of Convex Relaxations via Dual Agents}
\begin{document}

\twocolumn[
  \icmltitle{{AI-Assisted Discovery of Convex Relaxations via Dual Agents}}



  \icmlsetsymbol{equal}{*}

  \begin{icmlauthorlist}
    \icmlauthor{Sungyoon Kim}{sch}
    \icmlauthor{Mert Pilanci}{sch}
  \end{icmlauthorlist}

  \icmlaffiliation{sch}{Department of Electrical Engineering, Stanford University, CA, USA}

  \icmlcorrespondingauthor{Sungyoon Kim}{sykim777@stanford.edu}

  \icmlkeywords{Machine Learning, ICML}

  \vskip 0.3in
]



\printAffiliationsAndNotice{}  

\begin{abstract}
Recent work shows that LLM agents can improve sharp-constant inequalities by
searching for extremal constructions, which yield \emph{upper} bounds. We address
the complementary side: a \emph{lower} bound holds for every admissible function and
follows from a convex relaxation of the nonconvex problem, with tighter relaxations
giving stronger bounds. We instantiate the autoresearch paradigm to discover such
relaxations: a coding agent proposes valid tightening constraints, a theory agent
verifies each one and searches for counterexamples, and every reported bound is
certified by an explicit dual-feasible point checked in rigorous interval
arithmetic. On two optimization constants studied by \citet{tao2025alphaevolve} - the
first autocorrelation inequality ($C_{6.2}$) and the Erd\H{o}s minimum-overlap
constant ($C_{6.5}$) - we improve the certified lower bounds from $1.28$ to $1.2937$
and from $0.379005$ to $0.37912$, respectively.
\end{abstract}
 
\section{Introduction}

Large language model (LLM) agents have begun to contribute to mathematical research in ways that were implausible only a short time ago. On competition mathematics, verification-and-refinement pipelines now reach gold-medal performance at the International Mathematical Olympiad~\citep{huang2025winning}, and frontier models have moved beyond contests toward genuine research: GPT-5 has produced new, human-verified results across several areas of mathematics, including settling previously open Erdős problems~\citep{openai2025science}. Larger-scale efforts point the same way - \citet{tsoukalas2026advancing} report solving 9 of 353 open Erdős problems with a frontier agent under Lean - based formal verification, while contamination - free benchmarks built from unpublished problems, most prominently \emph{First Proof}~\citep{firstproof2026}, show models solving research-level mathematics rather than retrieving memorized patterns. Taken together, these episodes indicate that agentic AI is no longer only a tool for routine assistance, but a participant in mathematical discovery.

Part of what makes these systems interesting is that their strengths differ from those of human mathematicians. An agent does not tire, can pursue a search through the night, and can ingest a body of literature far larger than any individual could hold in mind at once. These differences suggest that AI can contribute to mathematics in a different way from humans - not by imitating human workflows, but by playing to strengths humans lack. A clear example arises in functional analysis, where evolutionary and LLM-guided search has been used to improve bounds on autocorrelation inequalities and the related Erd\H{o}s minimum
overlap problem~\citep{tao2025alphaevolve, tttdiscover2026, einsteinarena2026, novikov2025alphaevolve}. There, the goal is to \emph{construct}
an extremal function: an explicit example that witnesses a bound. Because any
single admissible function yields a valid bound, the task reduces to optimizing a
score over a vast, highly nonconvex space of candidate functions-precisely the
regime in which an agent can act as a tireless and creative \emph{discoverer},
finding constructions that human search would reach only slowly, if at all.

We pursue a different way of leveraging these tools. Our work is motivated by two observations: AI-assisted coding has matured to the point where agents write competent, nontrivial programs \citep{jimenez2024swe}, and agents are increasingly able to solve hard algorithmic problems \citep{jain2025livecodebench, zheng2026livecodebench}. We use these capabilities to improve the \emph{other} side
of these inequalities-the \emph{lower} bounds. Unlike an upper bound, which
follows from exhibiting one function, a lower bound must hold against
\emph{every} admissible $f$; one obtains such a bound by solving a convex
relaxation of the original nonconvex problem, and a tighter relaxation yields a
stronger bound (we make this precise in \cref{sec:prelim}). Finding a good relaxation is itself an iterative, open-ended task: an agent repeatedly edits a program, benchmarks it, and tries to beat its own best result. This is a natural fit for the \emph{autoresearch} paradigm~\citep{karpathy2026autoresearch}, as seen in self-improving coding agents~\citep{robeyns2025selfimproving} and LLM-driven program search~\citep{romera2024mathematical}. We instantiate this paradigm for the discovery of convex relaxations
that certify lower bounds. A difficulty particular to our setting is that an agent may simply be \emph{wrong} - it may propose a relaxation whose constraints are not actually valid for all $f$, or commit prematurely to a single line of attack. The correctness of the convex problem is the central risk: an invalid constraint yields a bound that is not a bound at all. We therefore pair the coding
agent with a \emph{theory agent} whose role is to read the proposed relaxation, rigorously establish that each constraint is valid, and verify the accompanying argument before any improvement is accepted.

Using this dual-agent loop, we improve the certified lower bounds on two autocorrelation inequalities studied by \citet{tao2025alphaevolve} - $C_{6.2}$, arising
in additive combinatorics and $C_{6.5}$, controlling the asymptotics of the Erd\H{o}s
minimum overlap problem (see \cref{sec:prelim} for definitions). For the first, we raise the lower bound from $1.28$ to $1.2937$; for the second, we improve the
best known lower bound of $0.379005$ to $0.37912$. To
our knowledge, this is the first use of an autoresearch-style agentic loop to improve \emph{certified lower bounds} on these inequalities: rather than searching for a better extremal example, the agents search for a better \emph{program} - a convex relaxation-whose validity is established by mathematical proof and whose bound is certified by duality.

\section{Preliminaries}
\label{sec:prelim}

\subsection{Autocorrelation Inequalities}
\label{sec:prelim-autocorr}

For absolutely integrable $f, g : \mathbb{R} \to \mathbb{R}$, the convolution is
$(f * g)(t) = \int_{\mathbb{R}} f(x) g(t-x)\, dx$; when $g$ equals $f$ or a
reflection of $f$, we refer to such expressions as \emph{autocorrelations} \citet{tao2025alphaevolve}.
Sharp constants in functional inequalities involving autocorrelations have been
studied extensively; we refer to \citet{tao2025alphaevolve} for a survey and for the problem numbering we adopt below, and \citet{optimization-constants-repo} for the history of bounds for these problems. We consider two such constants.

\paragraph{The first autocorrelation inequality (Problem 6.2 of \cite{tao2025alphaevolve}).}
Let $C_{6.2}$ denote the largest constant for which
\begin{equation}
\max_{-1/2 \le t \le 1/2} \int_{\mathbb{R}} f(t-x) f(x)\, dx
\;\ge\; C_{6.2} \left( \int_{-1/4}^{1/4} f(x)\, dx \right)^2
\label{eq:prob62}
\end{equation}
holds for all non-negative $f : \mathbb{R} \to \mathbb{R}$. This constant arises
in additive combinatorics, in connection with the size of Sidon sets \cite{cloninger2017suprema, martin2009supremum}. The
previously known bounds are
\begin{equation}
1.28 \;\le\; C_{6.2} \;\le\; 1.502862,
\end{equation}
where the lower bound is obtained by analytic/duality arguments~\citep{cloninger2017suprema}  and the upper bound by an
explicit counterexample~\citep{einsteinarena2026}. 


\paragraph{The Erd\H{o}s minimum overlap problem (Problem 6.5).}
Let $C_{6.5}$ denote the largest constant for which
\begin{equation}
\sup_{x \in [-2,2]} \int_{-1}^{1} f(t)\, g(x+t)\, dt \;\ge\; C_{6.5}
\label{eq:prob65}
\end{equation}
for all non-negative $f, g : [-1,1] \to [0,1]$ with $f + g = 1$ on $[-1,1]$ and
$\int_{-1}^{1} f = 1$. Outside this interval $[-1,1]$, $f(x) = g(x) = 0$. This
constant governs the asymptotics of the minimum overlap problem of
Erd\H{o}s~\citep{tao2025alphaevolve}. The known bounds are
\begin{equation}
0.379005 \;\le\; C_{6.5} \;\le\; 0.38087131058,
\end{equation}
where the lower bound was obtained via a family of convex optimization problems \cite{white2022erd} and the upper bound by
a step-function construction~\citep{einsteinarena2026}.

In both problems the quantity of interest is an infimum over an
infinite-dimensional set of admissible functions, of a score that is nonconvex in $f$. A lower bound on the constant is therefore a statement that \emph{no}
admissible function drives the score below a given value-a universally
quantified claim, in contrast to the existential claim established by a single
construction.

\subsection{Convex Relaxation and Certified Bounds}
\label{sec:prelim-relax}

We briefly fix the vocabulary of relaxation and duality used throughout. Consider
a nonconvex problem
\begin{equation}
p^\star \;=\; \inf_{f \in \mathcal{F}} \; J(f),
\label{eq:nonconvex}
\end{equation}
where $\mathcal{F}$ is the (infinite-dimensional) set of admissible functions and
$J$ is the score. A \emph{relaxation} replaces \eqref{eq:nonconvex} by a convex
problem
\begin{equation}
r^\star \;=\; \inf_{z \in \mathcal{C}} \; \tilde{J}(z),
\label{eq:relax}
\end{equation}
constructed so that every admissible $f$ induces a feasible $z \in \mathcal{C}$
with $\tilde{J}(z) \le J(f)$. Any such relaxation satisfies $r^\star \le
p^\star$, so $r^\star$ is a valid lower bound on the constant of interest. The
relaxation is \emph{valid} precisely when this inclusion holds-that is, when
every constraint defining $\mathcal{C}$ is satisfied by the encoding of every
admissible $f$. A \emph{tighter} relaxation is one with a larger $r^\star$
(equivalently, a smaller feasible set that still contains all admissible
encodings); tightening the relaxation improves the bound, up to the limit
$r^\star = p^\star$.

\paragraph{Certified bounds via duality.}
Solving \eqref{eq:relax} numerically returns a value, but for a \emph{certified}
bound we appeal to weak duality. For any dual-feasible point $\lambda$, the dual
objective $d(\lambda)$ satisfies
\begin{equation}
d(\lambda) \;\le\; r^\star \;\le\; p^\star
\label{eq:weakduality}
\end{equation}
Exhibiting a single dual-feasible $\lambda$ with $d(\lambda) = \beta$ therefore proves
$p^\star \ge \beta$, and this certificate is checkable independently of the solver
that produced it: one need only verify that $\lambda$ satisfies the dual
constraints and evaluate $d(\lambda)$. This is the sense in which our bounds are
certified, and it is what distinguishes a proven lower bound from a numerically
reported optimal value. We return to how we find a rigorous dual certificate and a lower bound in \cref{sec:method}.

\section{Method}
\label{sec:method}
\begin{figure*}[t]
\centering
\begin{tikzpicture}[
  font=\footnotesize,
  >={Stealth[length=1.6mm]},
  agent/.style={draw, rounded corners, minimum height=7mm, minimum width=15mm,
                align=center, thick, inner sep=2pt, fill=violet!18},
  relax/.style={draw, trapezium, trapezium left angle=70, trapezium right angle=110,
                align=center, thick, inner sep=3pt, fill=violet!18, minimum height=7mm},
  human/.style={draw, rounded corners, minimum height=7mm, minimum width=15mm,
                align=center, thick, inner sep=2pt, fill=orange!20},
  artifact/.style={draw, trapezium, trapezium left angle=70, trapezium right angle=110,
                   align=center, thick, inner sep=3pt, fill=gray!10, minimum height=7mm},
  every node/.style={align=center},
]

\node[agent] (coding) {Coding agent\\{\scriptsize add or tighten constraint}};
\node[agent, right=8mm of coding] (theory) {Theory agent\\{\scriptsize verify}};
\node[relax, right=11mm of theory] (program) {Updated\\relaxation};

\draw[->] (coding) -- (theory);
\draw[->] (theory) -- node[above,inner sep=0.5pt]{\scriptsize accept} (program);
\draw[->] (theory.north) .. controls +(0,5mm) and +(0,5mm) ..
      node[above,inner sep=0.5pt]{\scriptsize reject} (coding.north);
\draw[->] (program.south) .. controls +(0,-5mm) and +(0,-5mm) ..
      node[below,inner sep=0.5pt]{\scriptsize iterate} (coding.south);

\begin{scope}[on background layer]
  \node[draw=violet!70, dashed, very thick, rounded corners,
        fit=(coding)(theory)(program), inner xsep=4mm, inner ysep=8mm] (loopbox) {};
\end{scope}
\node[violet!70!black, font=\small\bfseries, anchor=east, fill=white, inner sep=2pt]
      at ([xshift=-4mm]loopbox.north east) {Automated};

\node[artifact, right=12mm of loopbox.east, anchor=west] (final) {Final\\relaxation};
\node[human, right=8mm of final] (human) {Human\\{\scriptsize verify / compactify}};
\node[artifact, right=7mm of human] (out) {Dual\\certified\\bound};

\draw[->] (program.east) -- node[above,inner sep=1pt,fill=white]{\scriptsize terminate} (final.west);
\draw[->] (final) -- (human);
\draw[->] (human) -- (out);

\end{tikzpicture}
\caption{The framework. The \emph{automated} block (purple, dashed) alternates
between a coding agent that adds or tightens a constraint and a theory agent that
verifies its validity and searches for counterexamples; accepted constraints
update the relaxation, and the loop repeats. On termination it yields the final
relaxation, which a human verifies and compactifies; the resulting bound is
certified by an explicit dual-feasible point.}
\label{fig:loop}
\end{figure*}

\subsection{Problem Formulation}
\label{sec:method-formulation}

Each constant in \cref{sec:prelim-autocorr} is the value of a min-max problem over an infinite-dimensional class of admissible functions. We describe the
convex problem formulation through the minimum overlap problem, following
\citet{white2022erd}; the autocorrelation inequality $C_{6.2}$ of
\eqref{eq:prob62} is treated analogously.

For the minimum overlap problem, the constant is the infimum over admissible $f$ of a maximum over translates,
\begin{equation}
C_{6.5} \;=\; \inf_{f}\;\max_{t \in [-2, 2]}\;\int_{-1}^{1} f(x)g(x+t)\,dx,
\label{eq:overlap-minmax}
\end{equation}
taken over measurable $f:[-1,1] \rightarrow [0,1]$ and normalization $\int_{-1}^{1} f = 1$. White's reduction turns this nonconvex min--max into a convex program of a finite
list of variables. Write $M(t)=\int_{-1}^{1} f(x)g(x+t)\,dx$ for the overlap at $t$, where $g(x) = 1-f(x)$ for $x \in [-1,1]$ and 0 otherwise, and partition $[0, 2]$ into grid cells $I_1,\dots,I_N$ of width
$L$. White's reduction then introduces the following variables: a scalar $\Omega$ bounding $\max_k M(k)$; the cell integrals
of the overlap on the two shift directions,
\begin{equation}
w_j \;=\; \frac{1}{L} \int_{I_j} M(k)\,dk, \quad v_j \;=\; \frac{1}{L} \int_{I_j} M(-k)\,dk,
\label{eq:wv-def}
\end{equation}
the cosine and sine Fourier coefficients $c_k,d_k$ of $f$ and $a_m,b_m$ of $M$; and
truncation slacks $\varepsilon_m,\delta_m$ that absorb the Fourier modes beyond the
truncation order so that finitely many coefficients suffice.
 
Admissibility of $f$ imposes constraints on these variables, and the essential
point is that each one must be a genuine consequence of $0\le f\le 1$ and
$\int_{-1}^{1} f=1$, so that no admissible $f$ is excluded. Two are immediate. The property $\int_{-2}^{2} M(x)dx=1$ gives the linear identity $L\sum_j(w_j+v_j)=1$, and because
$\Omega$ bounds the overlap at every translate, each cell integral obeys $w_j,v_j\le \Omega$. The remaining conditions - mean and second-moment bounds, the Fourier identities linking $a_m,b_m$ to $c_k,d_k$, and the bounds on the truncation slack $\epsilon_m, \delta_m$, are derived in the same spirit; we refer the reader to
\citet{white2022erd} or \cref{alg:overlap} for the full list. We retain only those conditions that are convex in the variables to make the problem convex. \textbf{Let $\mathcal{K}$ be the feasible set they cut out.}
 
The lower bound now follows from a single observation. Fix any admissible $f$ and
read off its variables: take $\Omega=\max_{k\in [-2,2]} M(k)$, let $w_j,v_j$ be the cell
integrals in \eqref{eq:wv-def}, and let the remaining entries be the Fourier data of
$f$. By construction this assignment satisfies every constraint of $\mathcal{K}$, so
it is a feasible point of the convex program, and its objective value is exactly
$\max_{k\in[-2,2]} M(k)$, the overlap score of $f$. The convex program minimizes that same
objective over all of $\mathcal{K}$, returning a value $\Omega^\star$. Since every
admissible $f$ supplies a feasible point whose objective equals its own score, the
minimum can only be smaller: $\Omega^\star\le \max_{k\in[-2,2]} M(k)$ for every admissible $f$,
and therefore $\Omega^\star\le C_{6.5}$. 

\subsection{Successive Convex Relaxation}
\label{sec:method-successive}

The quality of the convex problem is governed entirely by the constraint set $\mathcal{K}$. Tightening $\mathcal{K}$-adding a valid condition, or
strengthening an existing one-shrinks the feasible set toward the true set of
admissible coefficient vectors, and the minimum can only increase:
\begin{equation}
\mathcal{K}' \subseteq \mathcal{K}
\quad\Longrightarrow\quad
\min_{c \in \mathcal{K}'} \Phi(c) \;\ge\; \min_{c \in \mathcal{K}} \Phi(c).
\label{eq:monotone}
\end{equation}
So long as every condition is valid, $\mathcal{K}'$ still contains all admissible
encodings, and the larger value remains a lower bound on $\Omega$. Iterating
yields a monotone sequence of valid lower bounds increasing toward $\Omega$.

We stress that ``tightening'' here means \emph{adding valid constraints or improving existing ones}, not
naively increasing resolution. One could enlarge the model with more Fourier modes or
finer step functions; this refines the discretization but does not, in itself,
tighten the relaxation in the sense of \eqref{eq:monotone}. The improvements we
seek are mathematical: properties of admissible coefficient sequences that a
coarser formulation failed to impose. The task at each iteration is therefore to
\emph{identify a valid constraint that previous relaxations overlooked}, or to
sharpen one imposed too weakly. The concrete constraints discovered for each
problem, and their effect on the bound are reported in \cref{sec:results}.

\subsection{The Dual-Agent Loop}
\label{sec:method-loop}

We search for such constraints with a human-in-the-loop framework: a loop of two
agents drives the discovery, and a human conducts a final review and
compactification. The loop instantiates the autoresearch paradigm
\citep{karpathy2026autoresearch} for the discovery of certified relaxations. We used Claude Opus 4.6 for our agent experiments, and detailed prompts can be found in \cref{AppxD}.

\paragraph{Coding agent.} The coding agent proposes an improvement to the current
relaxation-typically a new constraint on the coefficient vector $c$, or a
strengthening of an existing one-together with a rigorous proof that the constraint is valid, i.e.\ that it holds for the coefficients of every admissible $f$. It
then implements the modified convex program and solves it, producing a candidate
bound.

\paragraph{Theory agent.} The theory agent reads the proposed constraint and its proof and checks that the constraint is a sound consequence of the admissibility of $f$. Critically, it does not only attempt to confirm the argument: it also \emph{actively searches for a counterexample}-an admissible $f$ whose coefficients violate the proposed constraint. A constraint survives only
if its justification holds and no counterexample is found. This guards the invariant on which the entire bound rests: a constraint that some admissible $f$ violates would shrink $\mathcal{K}$ below the admissible set and silently invalidate the lower bound.

The prompts used for the coding agent and the theory agent, as well as the problem descriptions, are in \cref{AppxD}.

\paragraph{Human review and compactification.} The agent loop tends to accumulate
constraints, not all of which are necessary: many are redundant at the optimum and
contribute nothing to the bound. Once the loop has produced a candidate
relaxation, a human performs a final pass-reading the program and the validity
arguments, and removing redundant constraints to obtain a minimal program.
Removing a constraint enlarges $\mathcal{K}$ and so, by \eqref{eq:monotone}, can
only lower the bound; a constraint is therefore dropped only when the certified
value was not meaningful in the history of objective values.

\paragraph{Certifying Correctness} Two conditions must hold for a reported bound to be valid, and they are established by different means. First, every constraint in the final program must be a true property of all
admissible $f$. This is established by a mathematical argument, produced by the agents and confirmed by human review; we do \emph{not} formally verify these
arguments (e.g.\ in a proof assistant), so this is the step at which correctness
ultimately rests on human judgment.
Second, the reported number must actually be a lower bound for the resulting convex program. This does \emph{not} rest on human judgment: given the program, its dual is formed by the standard duality recipe, and any dual-feasible point yields a valid bound by weak duality (\cref{sec:prelim-relax}). We certify each bound by exhibiting such a point, rigorously checking whether the point is dual-feasible, and evaluating the dual objective in interval arithmetic. 

We
work in the canonical conic form used by CVXPY \cite{diamond2016cvxpy}. Every program in this paper can be
written as
\begin{equation}
\min_{x}\; c^\top x \quad\text{subject to}\quad Ax + s = b,\;\; s\in\mathcal{K},
\label{eq:conic-primal}
\end{equation}
where $\mathcal{K}$ is a product of the nonnegative orthant, second-order cones, and
positive-semidefinite cones; CVXPY's canonicalization produces exactly this form.
The corresponding dual is
\begin{equation}
\max_{y}\; -b^\top y \quad\text{subject to}\quad A^\top y + c = 0,\;\; y\in\mathcal{K}^{*},
\label{eq:conic-dual}
\end{equation}
with $\mathcal{K}^{*}$ the dual cone, and weak duality gives, for any dual-feasible
$y$,
\begin{equation}
-b^\top y \;\le\; \min_x c^\top x \;=\; \Omega^{\star}.
\label{eq:weak-dual-conic}
\end{equation}
A single dual-feasible $\tilde y$ therefore proves $\Omega^\star \ge -b^\top\tilde
y$, independently of the solver that produced it. We obtain a candidate witness
$\tilde y$ from CVXPY and verify it rigorously, which raises two issues: $\tilde y$ is only numerically dual-feasible, and controlling the rounding introduced by irrational problem data.
 
Cone membership is the first requirement. For the orthant and second-order blocks it
reduces to a finite set of sign and norm inequalities, checked directly. The
positive-semidefinite blocks are more delicate: certifying that a matrix is PSD in
the presence of rounding cannot be done by computing eigenvalues naively, since a
matrix that is PSD in exact arithmetic may have a small negative computed
eigenvalue. We instead certify PSD-ness by a verified factorization carried out in
interval arithmetic; the details are deferred to Appendix~\ref{app:overlap}.
 
The second issue is rounding. The coefficients $A,b,c$ contain irrational entries
(trigonometric moments and Fej\'er weights), so the matrices actually stored,
$\bar A,\bar b,\bar c$, are rounded; writing $\varepsilon_A,\varepsilon_b,
\varepsilon_c$ for elementwise bounds on $\|A-\bar A\|_\infty$, $\|b-\bar b\|_\infty$, and
$\|c-\bar c\|_\infty$, each true coefficient is enclosed in an interval of the
corresponding width. Both the stored problem and the final objective therefore carry
rounding, and a rigorous bound must absorb both. Let $r$ bound the dual residual
$\|A^\top\tilde y + c\|_\infty$, the extent to which $\tilde y$ fails exact dual
feasibility, and let $X$ be an a priori bound on $\|x\|$ over primal-optimal $x^{*}$.
Weak duality together with these error terms yields the certified bound
\begin{equation}
\Omega^{\star} \;=\; c^\top x^{\star}
\;\ge\; -b^\top \tilde y
\;-\;\bigl(r + \varepsilon_A\|\tilde y\|_1 + \varepsilon_c\bigr)\,X
\;-\;\varepsilon_b\|\tilde y\|_1.
\label{eq:certified-bound}
\end{equation}
Each quantity on the right is
a guaranteed bound, and using the bound leads to a rigorous lower bound of $\Omega^{*}$. We evaluate the entire
right-hand side in interval arithmetic with outward rounding, so the reported value is the lower endpoint of a verified enclosure. See \cref{AppxC} for a detailed derivation.

\section{Results}
\label{sec:results}

We apply the dual-agent approach described in \cref{sec:method} to two constants $C_{6.2}$ and $C_{6.5}$
For each we report the convex program discovered by the loop, and the certified lower bound it yields. All bounds are certified by an explicit dual-feasible point
evaluated in exact rational arithmetic (Section~\ref{sec:method}); no
floating-point margin enters the final certificate. At last, an analysis of the loop's behavior for the first autocorrelation inequality is discussed in \cref{sec:results-loop}.
 
\subsection{Discovered convex programs}
\label{sec:results-programs}
 
For $C_{6.5}$ the loop took the convex program of \citet{white2022erd} as its starting point and searched for valid constraints to tighten it. The program
(Algorithm~\ref{alg:overlap}) shows the resulting convex program, with the \colorbox{boxpink}{boxed} pair of constraints are the ones added by the loop.
The discovered constraints are a pair of \emph{Bochner conditions}:
$T_f\succeq0$ enforces that the moment sequence of $f$ is realizable by a nonnegative measure, and $I-T_f\succeq0$ enforces the same for its complement
$g=1-f$. White's formulation does not explicitly use the constraint $0 \leq f \leq 1$ pointwise, where our advantage is introduced. The loop identified both conditions automatically, and the theory
agent verified their validity (Bochner's theorem, applied to $f$ and to $g=1-f$) before they were accepted. For a rigorous proof of the added constraint see \cref{AppxA}.

The convex program depends on six parameters
$h_1,h_2,p_1,p_2,q_1,q_2$ that bracket three quantities of an admissible $f$: the
mean $E(M)$ of the overlap, with $h_1\le E(M)\le h_2$; the first cosine coefficient
$c_1$ of $f$, with $p_1\le c_1\le p_2$; and the first sine coefficient $d_1$, with
$q_1\le d_1\le q_2$. Over all admissible $f$ these quantities range within
$E(M)\in[0,2]$, $c_1\in[0,1]$, and $d_1\in[-1,1]$ (for a proof see \cite{white2022erd}). Write
$\Omega(h_1,h_2,p_1,p_2,q_1,q_2)$ for the certified optimum of the program with the
parameters constrained to the box $[h_1,h_2]\times[p_1,p_2]\times[q_1,q_2]$. For any
such box the program is a valid relaxation, so its optimum lower-bounds the score of
every admissible $f$ whose $(E(M),c_1,d_1)$ falls in that box.
 
Since every admissible $f$ has $(E(M),c_1,d_1)$ somewhere in the full box
$\mathcal{B}=[0,2]\times[0,1]\times[-1,1]$, solving the program over all of
$\mathcal{B}$ at once already gives a valid lower bound on $\Omega$. That bound is
weak, however: a relaxation over a large parameter box is loose, so its optimum
comes out well below $\Omega$. The bound sharpens as the box shrinks - constraining the
parameters to a small region tightens the relaxation and raises its certified
optimum. We therefore split $\mathcal{B}$ into sub-boxes and bound each one
separately, which is the branch-and-bound step. Cover $\mathcal{B}$ by sub-boxes
$B'=[h_1,h_2]\times[p_1,p_2]\times[q_1,q_2]$; a single certified dual point for
$\Omega(h_1,h_2,p_1,p_2,q_1,q_2)$ lower-bounds the objective \emph{simultaneously for
every} $(E(M),c_1,d_1)\in B'$, and the smaller $B'$ is, the tighter this local bound.
A sub-box whose certificate already meets the target is cleared; otherwise it is
split further and the procedure recurses, until all of $\mathcal{B}$ is covered. The
reported bound is the smallest certified optimum over the cover - a valid lower bound
on $\Omega$, and a sharper one than the single whole-box solve.
 
 
For $C_{6.2}$ no prior convex program was available; the loop constructed one from scratch (Algorithm~\ref{alg:autocorr}). The agentic system introduced variables as the following: a discretized nonnegative measure $p$ on $[-\tfrac14,\tfrac14]$ (the support
reduction of Lemma~\ref{lem:support}), with trigonometric moments $a_k,b_k$. The
program combines: a probability simplex and a positive-semidefinite moment matrix;
per-cell Fourier envelopes bracketing $a_k,b_k$, computed in exact integer
arithmetic so envelope validity carries no floating-point tolerance; a
Fej\'er--Riesz Gram matrix $Q$ certifying $\Omega-(F_K*g)(t)\ge0$ for all $t$ (the
sum-of-squares encoding of the pointwise lower bound on the autocorrelation peak,
$g=f*f$); a quadratic energy bound following from
$\int g^2=\sum_m|\hat f(m)|^4$; and plain and \emph{localized} Bochner conditions,
the latter valid because $\cos(2\pi x)\ge0$ exactly on $[-\tfrac14,\tfrac14]$. The derivation of the novel convex program is in \cref{AppxB}.
 
\begin{algorithm}[!ht]
\caption{Discovered convex program for the Erd\H{o}s minimum-overlap constant $C_{6.5}$.}
\label{alg:overlap}
\begin{algorithmic}[1]
\REQUIRE grid size $N$ ($L=2/N$), truncation $T$, modes $R$; box $h_1,h_2,p_1,p_2$ and fixed $q_1,q_2$
\STATE \textbf{minimize} $\Omega$
\STATE over $\Omega\le1$, $\{w_j,v_j\}$, $\{c_k,d_k\}_{k=1}^T$, $\{a_m,b_m\}_{m=1}^{2R}$, $\{\varepsilon_m,\delta_m\}_{m=1}^R$
\STATE \textbf{subject to}
\STATE $0\le w_j,\,v_j\le \Omega\le 1$
\STATE $L\sum_j (w_j+v_j)=1$
\STATE $h_1\le L^2\sum_j(jw_j-(j-1)v_j)$
\STATE $L^2\sum_j((j-1)w_j-jv_j)\le h_2$
\STATE $\frac23+\frac{h_1^2}{2}\le L^3\sum_j j^2(w_j+v_j)$
\STATE $L^3\sum_j(j-1)^2(w_j+v_j)\le \frac23+\frac{h_2^2}{2}$
\STATE $\frac{L}{2}\sum_j\alpha^-_{j,m}(w_j+v_j)\le
       \frac{4\sin(\pi m/2)}{\pi m}a_m-2(a_m^2+b_m^2)$
\STATE $\frac{L}{2}\sum_j(\beta^-_{j,m}w_j-\beta^+_{j,m}v_j)\le
       -\frac{4\sin(\pi m/2)}{\pi m}b_m$
\STATE $-\frac{4\sin(\pi m/2)}{\pi m}b_m\le
       \frac{L}{2}\sum_j(\beta^+_{j,m}w_j-\beta^-_{j,m}v_j)$
\STATE $a_m=\frac12c_{m/2}$, $b_m=\frac12d_{m/2}$ \hfill ($m$ even)
\STATE $a_m=\varepsilon_m+\frac{2m\sin(\pi m/2)}{\pi}
       \left(\frac{1}{2m^2}+\sum_{k=1}^{T}\frac{(-1)^k}{m^2-4k^2}c_k\right)$
\STATE $b_m=\delta_m+\frac{4}{\pi}
       \sum_{k=1}^{T}\frac{k(-1)^k\sin(\pi m/2)}{m^2-4k^2}d_k$ \hfill ($m$ odd)
\STATE $|\varepsilon_m|,|\delta_m|\le \text{tail bound}$ \hfill ($2m-1<2T$)
\STATE $|c_k|,|d_k|\le \frac{2}{\pi}$, $\sum_k(c_k^2+d_k^2)\le \frac12$
\STATE $p_1\le c_1\le p_2$, $q_1\le d_1\le q_2$
\STATE $\frac{L}{2}\sum_j\alpha^+_{j,2}(w_j+v_j)\ge
       -\frac12(p_2^2+\max\{q_1^2,q_2^2\})$
\STATE \colorbox{boxpink}{$T_f\succeq0,\qquad I-T_f\succeq0$}
\STATE \colorbox{boxpink}{$(T_f)_{kl}=\frac14\delta_{kl}
       +\frac12(a_{|k-l|}-\mathrm{\textbf{i}}\operatorname{sgn}(k-l)b_{|k-l|})$}
\STATE \textbf{Branch and bound:} Fix $(q_1,q_2)$ and sweep the $(h,p)$ rectangle. Certify each rectangle by an exact-arithmetic dual point; if certified, clear it, otherwise split and recurse.
return smallest certified value over cleared rectangles
\end{algorithmic}
\end{algorithm}

\begin{algorithm}[!ht]
\caption{Discovered semidefinite program for the Sidon-set autocorrelation constant $C_{6.2}$.}
\label{alg:autocorr}
\begin{algorithmic}[1]
\REQUIRE cells $2N$ on $[-\frac14,\frac14]$, modes $K$
\STATE \textbf{minimize} $\Omega$
\STATE over $\Omega\ge0$, $p\in\mathbb{R}^{2N}_{\ge0}$, moments $a_k,b_k$, $M\succeq0$, Hermitian $Q\succeq0$, slacks $v_k\ge0$
\STATE \textbf{subject to}
\STATE $\sum_j p_j=1$
\STATE $M\succeq0$, $M_{00}=1$, $M_{0k}=a_k$, $M_{0,K+k}=b_k$
\STATE $\gamma^-_k\cdot p\le a_k\le\gamma^+_k\cdot p$, $\sigma^-_k\cdot p\le b_k\le\sigma^+_k\cdot p$
\STATE $\Omega\ge 1+2\sum_k w_kv_k$, $v_k\ge(M_{kk}+M_{K+k,K+k})^2$
\STATE $Q\succeq0$, $\operatorname{tr}(Q)=\Omega-1$
\STATE $\sum_i Q_{i,i+k}=\mathrm{\textbf{i}}\,2w_kM_{k,K+k}-w_k(M_{kk}-M_{K+k,K+k})$
\STATE $\begin{bmatrix}T_R&-T_I\\T_I&T_R\end{bmatrix}\succeq0$,
       $T_{ij}=a_{|i-j|}-\mathrm{\textbf{i}}\operatorname{sgn}(i-j)b_{|i-j|}$
\STATE $\begin{bmatrix}T^\nu_R&-T^\nu_I\\T^\nu_I&T^\nu_R\end{bmatrix}\succeq0$, \\
$(T^\nu)_{ij}=\nu_{i-j}$, $\nu_k=\frac12(\hat\mu_{k-1}+\hat\mu_{k+1})$
\end{algorithmic}
\end{algorithm}
 
\subsection{Improved bounds}
\label{sec:results-bounds}
 
Both constants have long histories of incremental improvement, and in both cases
the recent gains on the \emph{upper} (construction) side came from search-based
methods, while the \emph{lower} (duality) side had not moved since 2022. Our
framework improves the lower side of both.

\paragraph{Minimum overlap ($C_{6.5}$).}
The minimum-overlap constant $C_{6.5}=\lim_n M(n)/n$ has an even longer record. Early
lower bounds came from averaging and rearrangement arguments - Erd\H{o}s'
$\tfrac14$ \cite{erdHos1955some}, and Moser's
$\sqrt{4-\sqrt{15}}\approx0.35639$ \cite{moser1966overlap} - until \citet{white2022erd}
reformulated the lower bound as a convex program and obtained $0.379005$. The upper
side has been refined by step-function constructions: $0.382002$ \cite{haugland1996advances}
and $0.380926$ \cite{haugland2016minimum}, then $0.380924$ AlphaEvolve \citep{tao2025alphaevolve},
$0.380876$ \cite{tttdiscover2026} and $0.380871$ in \cite{einsteinarena2026}. Our augmented program certifies
\[
  0.37912 \leq C_{6.5}\ ,
\]
improving \citeauthor{white2022erd}'s $0.379005$. The bound is the minimum certified value over the branch-and-bound sweep
of the $(h,p)$ box; the subdivision tree is shown in Figure~\ref{fig:bb-tree} (Appendix~\ref{app:overlap}). Table~\ref{tab:c1b} summarizes the record.
 
\begin{table}[!ht]
\caption{State of the art for the Erd\H{o}s minimum-overlap constant $C_{6.5}$
(Eq.~\eqref{eq:prob65}). Lower bounds are the duality side; upper bounds come from
step-function constructions and recent search-based methods. Records as in
\citet{optimization-constants-repo}.}
\centering
\small
\setlength{\tabcolsep}{4pt}
\renewcommand{\arraystretch}{1.25}
\begin{tabular}{@{}p{0.45\linewidth}p{0.45\linewidth}@{}}
\toprule
Lower bounds & Upper bounds \\
\midrule
$0.25$ \cite{erdHos1955some} & $0.380926$~\citep{haugland2016minimum}\\
$0.35639$~\citep{moser1966overlap}               &  $0.380924$~\citep{tao2025alphaevolve} \\
$0.379005$~\citep{white2022erd}   
& $0.380876$~\cite{tttdiscover2026} \\
$\mathbf{0.37912}$~\textbf{(Ours)}    
&  $0.380871$ \cite{einsteinarena2026}\\
\bottomrule
\end{tabular}
\label{tab:c1b}
\end{table}  
 
\paragraph{Autocorrelation ($C_{6.2}$).}
$C_{6.2}$ has been bounded below through a sequence of
Fourier-analytic arguments: $1.262$~\citep{martin2009supremum} and
$1.2749$~\citep{matolcsi2010improved} - who also conjectured a ceiling of $1.276$
for their method - followed by $1.28$ via a finite-case relaxation
\citep{cloninger2017suprema}. The upper side, by explicit constructions, currently stands at $1.502862$~\citep{einsteinarena2026}. Our discovered SDP certifies
\[
  1.2937 \leq C_{6.2}\,
\]
improving the previous lower record of $1.28$. The certificate is an explicit dual-feasible point of the SDP, verified positive semidefinite and feasible in exact interval arithmetic, so the bound holds without reliance on the numerical solver or floating-point error analysis. Table~\ref{tab:c1a} summarizes the record.
 
\begin{table}[!ht]
\caption{State of the art for the Sidon-set autocorrelation constant $C_{6.2}$(Eq.~\eqref{eq:prob62}). Records as in
\citet{optimization-constants-repo}.}
\centering
\small
\setlength{\tabcolsep}{4pt}
\renewcommand{\arraystretch}{1.25}
\begin{tabular}{@{}p{0.45\linewidth}p{0.45\linewidth}@{}}
\toprule
Lower bounds & Upper bounds \\
\midrule
$1.262$~\citep{martin2009supremum}      & $1.503133$~\citep{wang2025thetaevolve} \\
$1.2749$~\citep{matolcsi2010improved}   & $1.503871$~\citep{ye2026evaluation} \\
$1.28$~\citep{cloninger2017suprema}     & $1.502870$~\citep{tttdiscover2026} \\
$\mathbf{1.2937}$~\textbf{(Ours)}       & $1.502862$~\citep{einsteinarena2026} \\
\bottomrule
\end{tabular}
\label{tab:c1a}
\end{table}
 
\subsection{Loop convergence and verification}
\label{sec:results-loop}

\begin{figure*}[t]
    \centering
    \includegraphics[width=\linewidth]{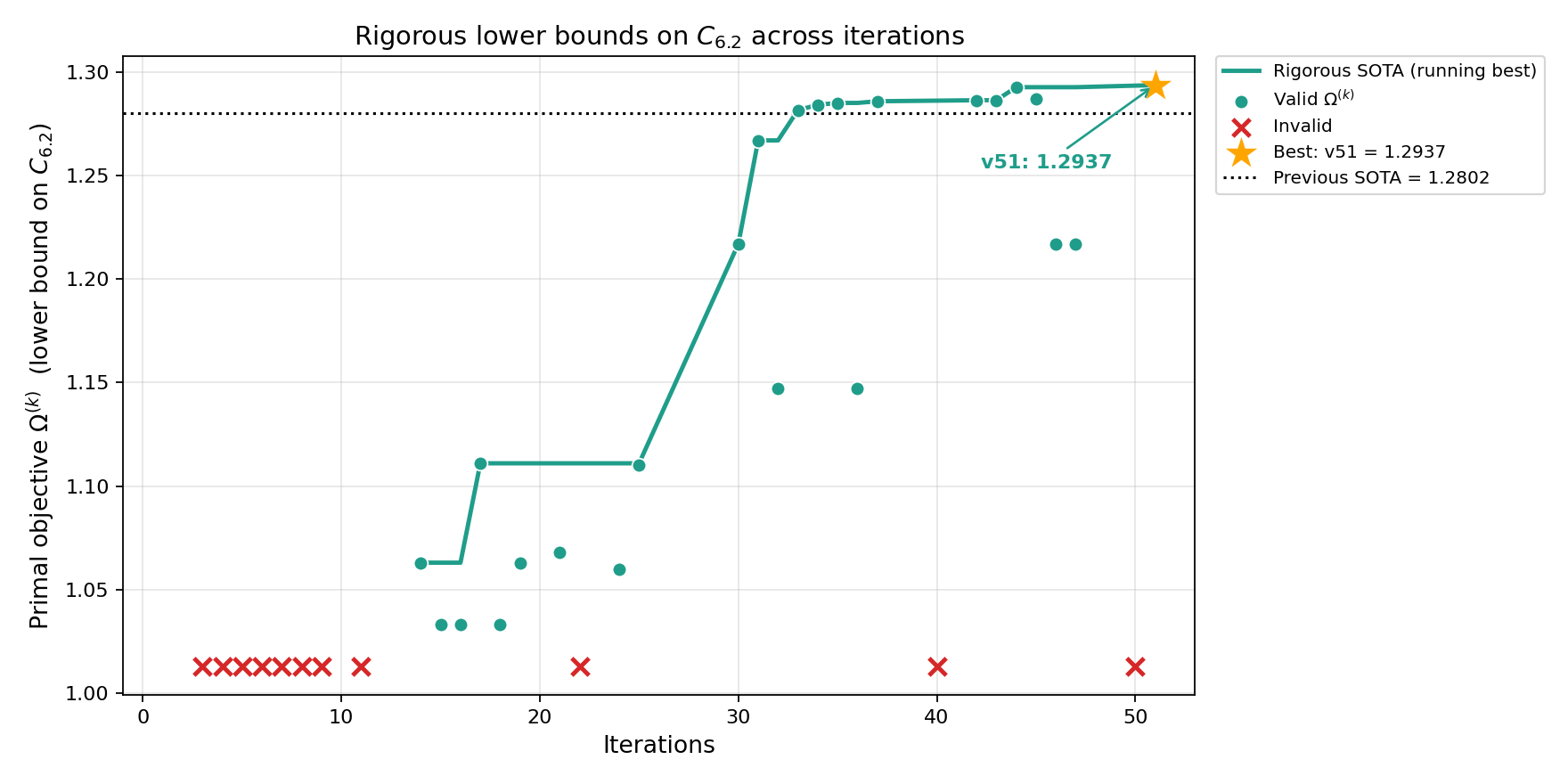}
    \caption{Certified primal objective of the convex program as a function of the loop iteration (version). The objective increases as the coding agent adds or tightens valid constraints; points in red are proposals the theory agent rejected as invalid (failing for some admissible $f$), as opposed to merely non-binding.}
\label{fig:loop-trajectory}
\end{figure*}

Figure~\ref{fig:loop-trajectory} plots the certified primal objective against the
loop iteration. The objective increases as the coding agent adds or tightens valid
constraints; points shown in red are proposals the verifier rejected as
\emph{invalid} (a constraint failing to hold for some admissible $f$, established by
an explicit counterexample), as opposed to merely non-binding. The loop makes progress on the certified bound while the verifier filters out invalid proposals that would otherwise corrupt it.
 
Beyond catching invalid \emph{constraints}, the theory agent also corrects the
search \emph{direction}. In one episode the coding agent restricted its candidate
functions to even functions, obtaining an invalid bound; the theory
agent flagged that the even restriction is a genuine loss of generality - it
discards admissible asymmetric configurations, so the resulting bound applies only
to the even-restricted constant and not to $C_{6.2}$ itself - and directed the next
iterations to drop it. After a few iterations the coding agent got rid of the even assumption: following the verifier's
recommendation, the coding agent got rid of the even function assumption and started looking for a genuine lower bound. For the raw interaction snippets see \cref{AppxF}.

\begin{figure}[t]
\centering
\begin{tikzpicture}[
  font=\footnotesize,
  codebubble/.style={fill=codeblue, text=codetext, rounded corners=6pt,
    align=left, inner sep=7pt, text width=56mm},
  theorybubble/.style={fill=theorylav, text=theorytext, rounded corners=6pt,
    align=left, inner sep=7pt, text width=56mm},
  speaker/.style={font=\scriptsize\bfseries, text=gray, inner sep=1pt},
]
\coordinate (L) at (0,0);
\coordinate (R) at (7.3,0);
 
\node[speaker, anchor=north west] (s1) at (L) {CODING AGENT};
\node[codebubble, anchor=north west] (b1) at ([yshift=-1pt]s1.south west)
  {Iteration~v7 assumes $f$ is even. The WLOG reductions and the
  windowed-integral constraints are proved; the bound rises monotonically to
  $1.344$ at $N{=}20$, strictly above $1$ for the first time.};
 
\node[speaker, anchor=north east] (s2) at ([yshift=-5mm]b1.south-|R) {THEORY AGENT};
\node[theorybubble, anchor=north east] (b2) at ([yshift=-1pt]s2.south east)
  {The proof is valid - but it bounds the \emph{even-restricted} constant only.
  Since $C_{6.2}\le C^{\mathrm{even}}_{y.2}$, this is not a bound on $C_{6.2}$: the
  even restriction discards admissible asymmetric~$f$.};
 
\node[theorybubble, anchor=north east] (b3) at ([yshift=-3mm]b2.south east)
  {Concretely, $1.344$ sits below the unrestricted target $C_{6.2}\ge1.28$.
  Drop the even restriction next - e.g.\ a test functional on the autocorrelation
  $F*\check F$, whose Fourier data is nonnegative for \emph{all} admissible~$f$.};
 
\node[speaker, anchor=north west] (s4) at ([yshift=-5mm]b3.south-|L) {CODING AGENT};
\node[codebubble, anchor=north west] (b4) at ([yshift=-1pt]s4.south west)
  {Iteration~v14 removes the even restriction, giving the first rigorous bound on unrestricted $C_{6.2}$.};
 
\end{tikzpicture}
\caption{A representative exchange from the loop (condensed). The coding agent
restricts to even $f$ to make progress; the theory agent verifies the proof but
flags that the even restriction is a genuine loss of generality, so the bound does
not apply to $C_{6.2}$, and redirects the search. Iteration~v14 follows this
recommendation and obtains the first rigorous bound on the unrestricted constant.}
\label{fig:verifier-example}
\end{figure}

\section{Conclusion}
\label{sec:conclusion}
 
We presented a human-in-the-loop framework that applies the autoresearch paradigm
to a target it has not previously been used for: the discovery of convex
relaxations that certify \emph{lower} bounds on sharp-constant inequalities. Where
prior agent-based work improves such constants by constructing extremal examples,
which bound the constant from above, our agents search for valid constraints that
tighten a convex relaxation and thereby raise a certified lower bound. A coding
agent proposes and implements constraints, a theory agent checks their validity
and searches for counterexamples, and a human reviews and compactifies the final
program; the reported bound is then certified by an explicit dual-feasible point
verified in exact arithmetic. On two autocorrelation inequalities of
\cite{tao2025alphaevolve}, this yielded improved certified lower bounds.
 
Although we demonstrate the framework on autocorrelation inequalities, nothing in
it is specific to these problems, or even to sharp-constant inequalities. The same
loop applies wherever a lower bound on a nonconvex optimum can be expressed as a
convex relaxation that is tightened by adjoining valid constraints and certified by
duality-a setting that recurs across optimization, combinatorics, and control.
The agents need only the ability to propose candidate constraints, argue their
validity, and implement the resulting convex program; the certification step is
problem-agnostic. Mapping out the class of nonconvex problems on which this is
effective is, in our view, the most promising direction the method opens.
 
Two limitations also indicate where the framework can be strengthened. The
validity of each constraint currently rests on a mathematical argument confirmed by
human review rather than formally verified. Recent progress on automated proof
generation in Lean-both agentic prover-repair pipelines~\citep{apollo2025} and
autoformalization of abstract results in convex
optimization~\citep{sita2025}-suggests that this step could be machine-checked
end to end, removing the residual reliance on human judgment and making the loop
fully autonomous; coding agents have already been used to autoformalize substantial
bodies of mathematics in Lean~\citep{urban2026topology}. Folding the final review
and compactification into the loop is the corresponding step on the discovery side.
Together, these would turn the present human-in-the-loop framework into an
end-to-end automated and formally verified one.

\section*{Acknowledgements}
This work was supported in part by the National Science Foundation (NSF) CAREER Award under Grant CCF-2236829, in part by the National Institutes of Health under Grant 1R01AG08950901A1, in part by the Office of Naval Research under Grant N00014-24-1-2164, and in part by the Defense Advanced Research Projects Agency under Grant HR00112490441.

\bibliography{references}

@article{tao2025alphaevolve,
  title={Mathematical exploration and discovery at scale},
  author={Georgiev, Bogdan and G{\'o}mez-Serrano, Javier and Tao, Terence and Wagner, Adam Zsolt},
  journal={arXiv preprint arXiv:2511.02864},
  year={2025}
}

@article{firstproof2026,
  title={First Proof},
  author={Abouzaid, Mohammed and Blumberg, Andrew J and Hairer, Martin and Kileel, Joe and Kolda, Tamara G and Nelson, Paul D and Spielman, Daniel and Srivastava, Nikhil and Ward, Rachel and Weinberger, Shmuel and others},
  journal={arXiv preprint arXiv:2602.05192},
  year={2026}
}

@misc{karpathy2026autoresearch,
  author       = {Karpathy, Andrej},
  title        = {autoresearch: {AI} agents running research on single-{GPU}
                  nanochat training automatically},
  year         = {2026},
  howpublished = {\url{https://github.com/karpathy/autoresearch}},
  note         = {GitHub repository, released March 2026; accessed 2026-05-21}
}

@article{robeyns2025selfimproving,
  title={A self-improving coding agent},
  author={Robeyns, Maxime and Szummer, Martin and Aitchison, Laurence},
  journal={arXiv preprint arXiv:2504.15228},
  year={2025}
}

@article{romera2024mathematical,
  title={Mathematical discoveries from program search with large language models},
  author={Romera-Paredes, Bernardino and Barekatain, Mohammadamin and Novikov, Alexander and Balog, Matej and Kumar, M Pawan and Dupont, Emilien and Ruiz, Francisco JR and Ellenberg, Jordan S and Wang, Pengming and Fawzi, Omar and others},
  journal={Nature},
  volume={625},
  number={7995},
  pages={468-475},
  year={2024},
  publisher={Nature Publishing Group UK London}
}

@article{openai2025science,
  title={Early science acceleration experiments with GPT-5},
  author={Bubeck, S{\'e}bastien and Coester, Christian and Eldan, Ronen and Gowers, Timothy and Lee, Yin Tat and Lupsasca, Alexandru and Sawhney, Mehtaab and Scherrer, Robert and Sellke, Mark and Spears, Brian K and others},
  journal={arXiv preprint arXiv:2511.16072},
  year={2025}
}

@article{white2022erd,
  title={Erdos' minimum overlap problem},
  author={White, Ethan Patrick},
  journal={arXiv preprint arXiv:2201.05704},
  year={2022}
}

@article{tttdiscover2026,
  title={Learning to discover at test time},
  author={Yuksekgonul, Mert and Koceja, Daniel and Li, Xinhao and Bianchi, Federico and McCaleb, Jed and Wang, Xiaolong and Kautz, Jan and Choi, Yejin and Zou, James and Guestrin, Carlos and others},
  journal={arXiv preprint arXiv:2601.16175},
  year={2026}
}

@article{apollo2025,
  title={Apollo: Automated llm and lean collaboration for advanced formal reasoning},
  author={Ospanov, Azim and Farnia, Farzan and Mohit, Roozbeh},
  journal={Advances in Neural Information Processing Systems},
  volume={38},
  pages={41599-41633},
  year={2026}
}

@inproceedings{sita2025,
  title={SITA: A Framework for Structure-to-Instance Theorem Autoformalization},
  author={Li, Chenyi and Ma, Wanli and Wang, Zichen and Wen, Zaiwen},
  booktitle={Proceedings of the AAAI Conference on Artificial Intelligence},
  volume={40},
  number={23},
  pages={19224-19232},
  year={2026}
}

@article{urban2026topology,
  title={130k Lines of Formal Topology in Two Weeks: Simple and Cheap Autoformalization for Everyone?},
  author={Urban, Josef},
  journal={arXiv preprint arXiv:2601.03298},
  year={2026}
}

@article{tsoukalas2026advancing,
  title={Advancing Mathematics Research with AI-Driven Formal Proof Search},
  author={Tsoukalas, George and Kovsharov, Anton and Shirobokov, Sergey and Surina, Anja and Firsching, Moritz and B{\'e}rczi, Gergely and Ruiz, Francisco JR and Suggala, Arun and Wagner, Adam Zsolt and Wieser, Eric and others},
  journal={arXiv preprint arXiv:2605.22763},
  year={2026}
}

@article{huang2025winning,
  title={Winning gold at imo 2025 with a model-agnostic verification-and-refinement pipeline},
  author={Huang, Yichen and Yang, Lin F},
  journal={arXiv preprint arXiv:2507.15855},
  year={2025}
}

@misc{einsteinarena2026,
  title        = {{EinsteinArena}: New State-of-the-Art Bounds for Open Problems},
  author       = {{Together AI}},
  year         = {2026},
  howpublished = {\url{https://github.com/togethercomputer/EinsteinArena-new-SOTA}},
  note         = {GitHub repository; new SOTA bounds for open problems in combinatorics and harmonic analysis obtained by AI agents. Accessed 2026-05-25}
}

@article{novikov2025alphaevolve,
  title={Alphaevolve: A coding agent for scientific and algorithmic discovery},
  author={Novikov, Alexander and V{\~u}, Ng{\^a}n and Eisenberger, Marvin and Dupont, Emilien and Huang, Po-Sen and Wagner, Adam Zsolt and Shirobokov, Sergey and Kozlovskii, Borislav and Ruiz, Francisco JR and Mehrabian, Abbas and others},
  journal={arXiv preprint arXiv:2506.13131},
  year={2025}
}

@inproceedings{jimenez2024swe,
  title={Swe-bench: Can language models resolve real-world github issues?},
  author={Jimenez, Carlos E and Yang, John and Wettig, Alexander and Yao, Shunyu and Pei, Kexin and Press, Ofir and Narasimhan, Karthik},
  booktitle={International Conference on Learning Representations},
  volume={2024},
  pages={54107--54157},
  year={2024}
}

@inproceedings{jain2025livecodebench,
  title={Livecodebench: Holistic and contamination free evaluation of large language models for code},
  author={Jain, Naman and Gu, Alex and Li, Wen-Ding and Yan, Fanjia and Zhang, Tianjun and Wang, Sida and Solar-Lezama, Armando and Sen, Koushik and Stoica, Ion},
  booktitle={International Conference on Learning Representations},
  volume={2025},
  pages={58791--58831},
  year={2025}
}

@article{zheng2026livecodebench,
  title={Livecodebench pro: How do olympiad medalists judge llms in competitive programming?},
  author={Zheng, Zihan and Cheng, Zerui and Shen, Zeyu and Zhou, Shang and Liu, Kaiyuan and He, Hansen and Li, Dongruixuan and Wei, Stanley and Hao, Hangyi and Yao, Jianzhu and others},
  journal={Advances in Neural Information Processing Systems},
  volume={38},
  year={2026}
}

@article{cloninger2017suprema,
  title={On suprema of autoconvolutions with an application to Sidon sets},
  author={Cloninger, Alexander and Steinerberger, Stefan},
  journal={Proceedings of the American Mathematical Society},
  volume={145},
  number={8},
  pages={3191--3200},
  year={2017}
}

@article{martin2009supremum,
  title={The supremum of autoconvolutions, with applications to additive number theory},
  author={Martin, Greg and O’Bryant, Kevin},
  journal={Illinois Journal of Mathematics},
  volume={53},
  number={1},
  pages={219--235},
  year={2009},
  publisher={Duke University Press}
}

@misc{optimization-constants-repo,
  title = {Optimization Constants in Mathematics},
  author = {Davis, Damek and Ivanisvili, Paata and Tao, Terence and contributors},
  year = {2026},
  howpublished = {GitHub repository},
  url = {https://github.com/teorth/optimizationproblems}
}

@article{erdHos1955some,
  title={Some remarks on number theory},
  author={Erd{\H{o}}s, Paul},
  journal={Riveon Lematematika},
  volume={9},
  pages={45--48},
  year={1955}
}

@inproceedings{moser1966overlap,
  title={On the overlap of a function with the translation of its complement},
  author={Moser, L and Murdeshwar, MG},
  booktitle={Colloquium Mathematicum},
  volume={15},
  number={1},
  pages={93--97},
  year={1966},
  organization={Institute of Mathematics Polish Academy of Sciences}
}

@article{haugland1996advances,
  title={Advances in the minimum overlap problem},
  author={Haugland, Jan Kristian},
  journal={journal of number theory},
  volume={58},
  number={1},
  pages={71--78},
  year={1996},
  publisher={Elsevier}
}

@article{haugland2016minimum,
  title={The minimum overlap problem revisited},
  author={Haugland, Jan Kristian},
  journal={arXiv preprint arXiv:1609.08000},
  year={2016}
}

@article{matolcsi2010improved,
  title={Improved bounds on the supremum of autoconvolutions},
  author={Matolcsi, M{\'a}t{\'e} and Vinuesa, Carlos},
  journal={Journal of mathematical analysis and applications},
  volume={372},
  number={2},
  pages={439--447},
  year={2010},
  publisher={Elsevier}
}

@article{ye2026evaluation,
  title={Evaluation-driven Scaling for Scientific Discovery},
  author={Ye, Haotian and Lin, Haowei and Tang, Jingyi and Luo, Yizhen and Yang, Caiyin and Su, Chang and Thapa, Rahul and Yang, Rui and Liu, Ruihua and Li, Zeyu and others},
  journal={arXiv preprint arXiv:2604.19341},
  year={2026}
}

@article{wang2025thetaevolve,
  title={Thetaevolve: Test-time learning on open problems},
  author={Wang, Yiping and Su, Shao-Rong and Zeng, Zhiyuan and Xu, Eva and Ren, Liliang and Yang, Xinyu and Huang, Zeyi and He, Xuehai and Ma, Luyao and Peng, Baolin and others},
  journal={arXiv preprint arXiv:2511.23473},
  year={2025}
}

@article{diamond2016cvxpy,
  title={CVXPY: A Python-embedded modeling language for convex optimization},
  author={Diamond, Steven and Boyd, Stephen},
  journal={Journal of Machine Learning Research},
  volume={17},
  number={83},
  pages={1--5},
  year={2016}
}
\bibliographystyle{icml2026}

\newpage
\appendix
\onecolumn
\section{Additional constraint of the convex problem of Erd\H{o}s minimum overlap problem}
\label{AppxA}

Our program for the Erd\H{o}s minimum overlap constant $C_{6.5}$ takes White's convex
program~\citep{white2022erd} as its base and adds two positive-semidefinite constraints discovered by the loop. We treat White's program as given and refer the
reader to \citet{white2022erd} for its derivation; this appendix justifies only the added constraints. 
 
Let $T_f$ be the Hermitian Toeplitz matrix built from the trigonometric moments of
$f$,
\begin{equation}
(T_f)_{kl} \;=\; \tfrac14\,\delta_{kl} \;+\; \tfrac12\bigl(a_{|k-l|} - \mathrm{i}\,\mathrm{sgn}(k-l)\,b_{|k-l|}\bigr),
\label{eq:Tf}
\end{equation}
where $a_m,b_m$ are the cosine and sine Fourier coefficients of $f$ (so that the
$(k,l)$ entry depends only on the lag $k-l$). The loop added the pair
\begin{equation}
T_f \succeq 0 \qquad\text{and}\qquad I - T_f \succeq 0.
\label{eq:bochner-pair}
\end{equation}
We show both are valid necessary conditions, i.e.\ that every admissible $f$
produces moments satisfying \eqref{eq:bochner-pair}; adding them therefore tightens
the relaxation without excluding any admissible $f$.
 
\begin{proposition}\label{prop:bochner}
For every admissible $f$, the matrix $T_f$ of \eqref{eq:Tf} satisfies
$T_f\succeq0$ and $I-T_f\succeq0$.
\end{proposition}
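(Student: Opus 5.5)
The plan is to write $T_f$ as the Toeplitz matrix of Fourier coefficients of a single function $\varphi$ on a circle that takes values in $[0,1]$. Both inequalities then follow from one quadratic-form computation: the lower bound from $\varphi\ge0$, and the upper bound from $\varphi\le1$ together with Parseval.

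\emph{Step 1 (representation).} I would fix the period $P$ and frequency $\omega=2\pi/P$ used in White's convention for the coefficients $a_m,b_m$. I expect $P=4$ and $\omega=\pi/2$, matching the $\sin(\pi m/2)$ factors in Algorithm~\ref{alg:overlap}, with $a_m=\int f(x)\cos(\omega m x)\,dx$ and $b_m=\int f(x)\sin(\omega m x)\,dx$. I would then find a measurable $\varphi$ on one period with $0\le\varphi\le1$ such that
\[
(T_f)_{kl}=\frac1P\int_{0}^{P}\varphi(x)\,e^{-\mathrm{i}\omega(k-l)x}\,dx \qquad\text{for all } k,l.
\]
The natural candidate is an affine combination of $f$ (extended by $0$ to a full period) and the constant function, for example $\varphi=\tfrac14+\tfrac12(\cdots)$. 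It must be chosen so that the off-diagonal entries reproduce $\tfrac12(a_{|k-l|}-\mathrm{i}\,\mathrm{sgn}(k-l)b_{|k-l|})$, and the diagonal reproduces the $\tfrac14\delta_{kl}$ term together with whatever $a_0$ contributes. Since $a_m-\mathrm{i}\,\mathrm{sgn}(m)b_m$ is exactly $\int f(x)e^{-\mathrm{i}\omega m x}\,dx$ for either sign of $m$, the off-diagonal entries match once the scalings agree. The pointwise bounds $0\le\varphi\le1$ should then follow from $0\le f\le1$ on $[-1,1]$ and $f=0$ outside it.

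\emph{Step 2 (quadratic form).} For $z\in\Complex^n$ I would set $p(x)=\sum_k \bar z_k e^{-\mathrm{i}\omega k x}$ and expand. This gives
\[
z^{*}T_f z=\frac1P\int_0^P\varphi(x)\,|p(x)|^2\,dx\;\ge\;0 .
\]
By orthonormality of $\{e^{\mathrm{i}\omega kx}\}$ in $L^2([0,P],dx/P)$, the same expansion gives
\[
z^{*}(I-T_f)z=\frac1P\int_0^P\bigl(1-\varphi(x)\bigr)|p(x)|^2\,dx\;\ge\;0 .
\]
This is the finite-dimensional form of Bochner's (Herglotz's) theorem, applied once to $\varphi$ and once to $1-\varphi$, which encodes $g=1-f$. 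Hermitian symmetry of $T_f$ is immediate from $\overline{\hat\varphi(m)}=\hat\varphi(-m)$, which holds because $\varphi$ is real.

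\emph{Main obstacle.} Step 2 is routine. The real difficulty is Step 1: matching the constants $\tfrac14$ and $\tfrac12$ and the treatment of the lag-$0$ entry exactly to White's normalization of $a_m,b_m$, so that the resulting $\varphi$ genuinely lies in $[0,1]$ and not merely in some larger interval.

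I would first compute $\tfrac1P\int\varphi$ for the candidate $\varphi$ and check it against the diagonal entry using $\int f=1$. If the constants do not fit with $P=4$, I would instead fold $f$ onto a shorter circle (for example period $2$, or using the doubling relation $a_{2m}=\tfrac12 c_m$ from Algorithm~\ref{alg:overlap}) and redo the identification. Any such choice works provided the resulting symbol is pointwise between $0$ and $1$.
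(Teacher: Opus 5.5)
Your strategy is the paper's: both proofs apply the Herglotz--Bochner identity $z^{*}Tz=\frac1P\int\varphi\,|p|^{2}$ once to $f$ and once to its complement. Your Step~2 is correct, and it is better normalized than the paper's one-liner. That proof writes the form over $[-1,1]$ with $e^{\mathrm{i}k\pi x}$ and no factor $\frac14$, so neither the diagonal $\frac14$ nor the claim $T_1=I$ literally matches \eqref{eq:Tf}. The gap is Step~1, which carries all the content of the proposition and which you leave open; the concrete options you float do not work. With your guess $a_m=\int f(x)\cos(\omega mx)\,dx$, the off-diagonal entries of $T_f$ are $2\hat f(m)$, where $\hat f(m)=\frac14\int_{-2}^{2}f(x)e^{-\mathrm{i}\pi mx/2}\,dx$. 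The representing symbol is then $2f-\frac14$ or $2f+\frac14$, depending on how the lag-$0$ entry is read. Neither is $[0,1]$-valued, so Step~2 proves nothing. Your fallback of folding onto a period-$2$ circle cannot reproduce $T_f$ at all. With the coefficients of Algorithm~\ref{alg:overlap}, period-$2$ lags only see $a_{2m}=\frac12c_m$ and $b_{2m}=\frac12d_m$, whereas $T_f$ carries $a_1,b_1,a_3,b_3,\dots$ on its odd diagonals.

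The missing computation is short. Take $c_k=\int_{-1}^{1}f(x)\cos(\pi kx)\,dx$ with $c_0=1$. Then the relation $a_{2k}=\frac12c_k$ and the $\frac{1}{2m^2}$ term in the odd-$m$ identity of Algorithm~\ref{alg:overlap} force $a_m=\frac12\int_{-1}^{1}f(x)\cos(\pi mx/2)\,dx$ and $b_m=\frac12\int_{-1}^{1}f(x)\sin(\pi mx/2)\,dx$. Hence $\frac12\bigl(a_{|m|}-\mathrm{i}\,\mathrm{sgn}(m)\,b_{|m|}\bigr)=\hat f(m)$ for $m\neq0$, and $\hat f(0)=\frac14$. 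So with $P=4$ and $\omega=\pi/2$ the symbol is simply $\varphi=f$ extended by zero to $[-2,2)$. No affine shift is needed, and the lag-$0$ entry must be read as $\frac14$ alone. Adding a term $\frac12a_0=\frac14$ would make the symbol $f+\frac14$, and $I-T_f\succeq0$ would then fail, e.g.\ for $f=\mathbf 1_{[-1/2,1/2]}$ once the matrix is large. Finally, the second weight in your Step~2 is $1-\varphi$, which equals $g$ on $[-1,1]$ but $1$ on $[-2,2]\setminus[-1,1]$. It is not $g$ itself, and it is exactly the normalized measure $dx/4$ on the whole circle that makes $T_{1}=I$. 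With these identifications your argument goes through.
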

 
\begin{proof}
By construction $T_f$ is the moment (Toeplitz) matrix of $f$: its entries are the
Fourier coefficients of $f$ arranged by lag, so for any complex vector $z$,
\[
z^{*}T_f z \;=\; \int_{-1}^{1} f(x)\,\Bigl|\sum_k z_k e^{\mathrm{i}k\pi x}\Bigr|^{2}\,dx .
\]
Since $f\ge0$, the integrand is nonnegative, hence $z^{*}T_f z\ge0$ for all $z$ and
$T_f\succeq0$. This is Bochner's theorem: a sequence is the moment sequence of a
nonnegative measure iff its Toeplitz form is positive semidefinite.
 
For the second condition, apply the same argument to the complement $g=1-f$. Because
$f\le1$ we have $g\ge0$, so $g$ is also a nonnegative function and its moment matrix
$T_g$ satisfies $T_g\succeq0$. The constant function $1$ has moment matrix equal to
the identity $I$ (its only nonzero Fourier coefficient is the constant term, which
contributes the diagonal), and the moment map is linear, so $T_g = T_{1} - T_f = I -
T_f$. Hence $I-T_f = T_g\succeq0$.
\end{proof}
 
\section{Derivation of the convex problem of the First Autocorrelation Inequality}
\label{AppxB}

 
This section derives the convex relaxation for the sharp constant $C_{6.2}$ of
\texttt{Problem\_6.2.md} and proves rigorously that its optimum $\Omega^{*}$ is a
valid lower bound, $C_{6.2}\ge\Omega^{*}$. The development follows the same
template as White's program for the Erd\H{o}s overlap problem
(Section~1): identify the Fourier/moment data of an admissible $f$, prove a list
of \emph{necessary} linear and semidefinite conditions on that data, and observe
that the relaxed feasible set therefore contains every admissible $f$.
 
\subsection{Reduction and normalisation}
 
We seek a lower bound for
\[
C_{6.2} = \inf_{f\ge 0}\;
\frac{\max_{-1/2\le t\le 1/2}\int_{\Real} f(t-x)f(x)\,dx}
     {\big(\int_{-1/4}^{1/4} f(x)\,dx\big)^2}.
\]
 
\begin{lemma}[Support reduction]\label{lem:support}
It suffices to bound the infimum over $f\ge 0$ with $\operatorname{supp} f \subseteq
[-1/4,1/4]$. Concretely, if for all $g\ge 0$ with $\operatorname{supp} g\subseteq
[-1/4,1/4]$ we have $\max_{\abs t\le 1/2}(g*g)(t)\ge C\big(\int_{-1/4}^{1/4} g\big)^2$,
then the same inequality holds for all $f \ge 0$.
\end{lemma}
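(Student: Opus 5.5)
Given an arbitrary $f\ge 0$, we restrict it to the window and compare. Set $g = f\,\mathbf{1}_{[-1/4,1/4]}$. Then $g\ge 0$ and $\operatorname{supp} g\subseteq[-1/4,1/4]$. The right-hand side is unchanged, since
\[
\int_{-1/4}^{1/4} g = \int_{-1/4}^{1/4} f .
\]
It therefore suffices to show that the left-hand side can only increase when $g$ is replaced by $f$, i.e.\ that $\max_{\abs t\le 1/2}(f*f)(t)\ge \max_{\abs t\le 1/2}(g*g)(t)$. Applying the hypothesis to $g$ and chaining the inequalities then gives the claim for $f$.

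The comparison comes from pointwise monotonicity of autoconvolution in the nonnegative cone. Since $0\le g\le f$ pointwise, for every $t$ we have $g(t-x)g(x)\le f(t-x)f(x)$ for all $x$. Integrating in $x$ gives $(g*g)(t)\le (f*f)(t)$, and taking the maximum over $\abs t\le 1/2$ preserves the inequality. If $\int_{-1/4}^{1/4} f=0$ the statement is trivial, because the right-hand side vanishes and the left-hand side is nonnegative. If $f*f$ is infinite somewhere, the inequality holds trivially as well. One could also interpret $\max$ as $\sup$ to avoid attainment issues. For $g$, which is integrable on a compact interval whenever the right-hand side is finite, $g*g$ is continuous and the maximum is attained.

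There is essentially no obstacle here. The only points needing care are measure-theoretic. When $f\notin L^1$ we cannot assume the integrals are finite, so I will adopt the convention that the inequality is trivial whenever the left side is $+\infty$ or the right side is $0$. The argument should also note that $g\in L^1$ requires $\int_{-1/4}^{1/4}f<\infty$. If that integral is infinite, the right side is infinite; I would then argue that $g*g$ is also infinite on a set of positive measure inside $[-1/2,1/2]$, since $\int g*g=(\int g)^2=\infty$ and $g*g$ is supported in $[-1/2,1/2]$. Hence the supremum on the left is infinite, and the inequality is read as $\infty\ge\infty$ or excluded by convention. This edge case is the step I expect to take the most wording.
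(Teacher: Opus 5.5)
Your proof is correct and is essentially the paper's argument: truncate to $g=f\,\mathbf{1}_{[-1/4,1/4]}$, note that the window integrals coincide, and use $0\le g\le f$ to get $(g*g)(t)\le (f*f)(t)$ pointwise before taking the maximum over $\abs{t}\le 1/2$. One of your side remarks is false, though the proof never uses it: $g\in L^1$ does not make $g*g$ continuous (for $g(x)=\abs{x}^{-1/2}\mathbf{1}_{[-1/4,1/4]}(x)$ one has $\int g=2$ but $(g*g)(0)=\int_{-1/4}^{1/4}\abs{x}^{-1}\,dx=\infty$), so reading $\max$ as $\sup$ in $[0,\infty]$, as you suggest, is the right convention, and the monotonicity argument needs nothing more.
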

 
\begin{proof}
Given $f\ge 0$, set $g = f\cdot\mathbf 1_{[-1/4,1/4]}$, so $0\le g\le f$ and
$\operatorname{supp} g\subseteq[-1/4,1/4]$. The denominators agree:
$\int_{-1/4}^{1/4} g = \int_{-1/4}^{1/4} f$. For the numerator, $g\le f$ and both
are nonnegative, so $(g*g)(t)=\int g(t-x)g(x)\,dx \le \int f(t-x)f(x)\,dx=(f*f)(t)$
pointwise; taking $\max_{\abs t\le 1/2}$ preserves the inequality. Hence
$\max(f*f)\ge\max(g*g)\ge C(\int g)^2 = C(\int f)^2$.
\end{proof}
 
By homogeneity we normalise $\int_{-1/4}^{1/4} f = 1$. Writing $g = f*f$ and
$\Omega = \max_{\abs t\le1/2}(g)(t) = \norm{f*f}_\infty$, the reduced problem is
\begin{equation}\label{eq:62-S}
S \;=\; \inf_{f}\;\Big\{\, \norm{f*f}_\infty \;:\; f\ge 0,\ \int_{-1/4}^{1/4} f = 1,\ \operatorname{supp} f\subseteq[-\tfrac 14,\tfrac14]\,\Big\},
\qquad C_{6.2} = S .
\end{equation}
 
\subsection{Fourier and moment data}
 
For an admissible $f$ define the real Fourier coefficients
\[
a_k = \int_{-1/4}^{1/4} f(x)\cos(2\pi k x)\,dx,
\qquad
b_k = \int_{-1/4}^{1/4} f(x)\sin(2\pi k x)\,dx,
\qquad k = 0,1,\dots,K,
\]
so that $\hat f(k) = a_k - i b_k$ and $\hat f(-k) = a_k + i b_k$, with
$a_0 = \int f = 1$ and $b_0 = 0$. Because $g = f*f$ has $\hat g(k) = \hat f(k)^2$,
the autocorrelation's spectrum is determined by $(a_k,b_k)$.
 
We also discretise the localisation window into $2N$ equal cells
$I_j = \big[-\tfrac14 + \tfrac{j}{4N},\, -\tfrac14 + \tfrac{j+1}{4N}\big]$,
$j = 0,1,\dots,2N-1$, and set the local masses
\[
p_j = \int_{I_j} f(x)\,dx \;\ge\; 0 .
\]
 
\subsection{Necessary conditions}
 
\paragraph{(1) Mass.} Since the cells partition $[-1/4,1/4]$ and $\int_{-1/4}^{1/4}f=1$,
\begin{equation}\label{eq:62-mass}
\sum_{j=0}^{2N-1} p_j = 1, \qquad p_j \ge 0 .
\end{equation}
 
\paragraph{(2) Fourier coefficients from local masses.}
For each cell define the trigonometric extrema
\[
C_{kj}^{-} = \min_{x\in I_j}\cos(2\pi k x),\quad
C_{kj}^{+} = \max_{x\in I_j}\cos(2\pi k x),\quad
S_{kj}^{-} = \min_{x\in I_j}\sin(2\pi k x),\quad
S_{kj}^{+} = \max_{x\in I_j}\sin(2\pi k x).
\]
 
\begin{lemma}[Coefficient envelopes]\label{lem:env}
For every admissible $f$ and every $k$,
\begin{equation}\label{eq:62-env}
\sum_{j=0}^{2N-1} C_{kj}^{-} p_j \;\le\; a_k \;\le\; \sum_{j=0}^{2N-1} C_{kj}^{+} p_j,
\qquad
\sum_{j=0}^{2N-1} S_{kj}^{-} p_j \;\le\; b_k \;\le\; \sum_{j=0}^{2N-1} S_{kj}^{+} p_j .
\end{equation}
\end{lemma}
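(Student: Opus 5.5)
The argument is a direct cellwise sandwich. Since $f$ is supported in $[-\tfrac14,\tfrac14]$ (by Lemma~\ref{lem:support}) and the cells $I_j$ partition this window up to endpoints of measure zero, I will decompose
\[
a_k=\int_{-1/4}^{1/4} f(x)\cos(2\pi k x)\,dx=\sum_{j=0}^{2N-1}\int_{I_j} f(x)\cos(2\pi k x)\,dx ,
\]
and similarly for $b_k$ with $\sin(2\pi kx)$. The shared endpoints of adjacent cells carry zero Lebesgue measure, so the overlap does not affect the integrals.

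Next I will bound each cell term. On $I_j$ the definitions give $C^-_{kj}\le\cos(2\pi kx)\le C^+_{kj}$ for every $x\in I_j$; the extrema are attained because $I_j$ is compact and the cosine is continuous. Multiplying by $f(x)\ge0$ preserves these inequalities pointwise. Integrating over $I_j$ and using $p_j=\int_{I_j}f$ then gives
\[
C^-_{kj}p_j\le\int_{I_j}f(x)\cos(2\pi kx)\,dx\le C^+_{kj}p_j .
\]
Summing over $j$ yields the first pair of inequalities in \eqref{eq:62-env}. The sine case is identical with $S^\pm_{kj}$. Integrability is not an issue: $f$ is integrable on the window and the trigonometric factors are bounded.

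The only points needing care are the nonnegativity of $f$, which is what lets us multiply pointwise bounds, and the support reduction, which guarantees that no mass lies outside the union of the cells. Both come directly from the admissibility assumptions.

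For the implementation, I would remark that the extrema $C^\pm_{kj},S^\pm_{kj}$ must be computed exactly, as the paper indicates, by checking the endpoints of $I_j$ together with any interior critical points $x=m/(4k)$. This is a computational matter rather than part of the proof itself.
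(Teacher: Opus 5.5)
Your proof is correct and is essentially the paper's argument: split the integral over the cells $I_j$, sandwich $\cos(2\pi kx)$ (resp.\ $\sin(2\pi kx)$) between its cell extrema, multiply by $f\ge0$, integrate using $p_j=\int_{I_j}f$, and sum over $j$. Your candidate set $x=m/(4k)$ is the union of the paper's critical points $m/(2k)$ for cosine and $(2m+1)/(4k)$ for sine, so it is a harmless superset and still yields the exact cell extrema.
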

 
\begin{proof}
Since $f\ge 0$ and $\operatorname{supp} f$ is partitioned by the $I_j$,
\[
a_k = \int_{-1/4}^{1/4} f(x)\cos(2\pi kx)\,dx
    = \sum_{j} \int_{I_j} f(x)\cos(2\pi kx)\,dx .
\]
On each cell, $C_{kj}^- \le \cos(2\pi kx)\le C_{kj}^+$ for all $x\in I_j$, and
$\int_{I_j} f = p_j \ge 0$. Multiplying the pointwise bounds by the nonnegative
density $f$ and integrating gives $C_{kj}^- p_j \le \int_{I_j} f\cos(2\pi kx)\,dx
\le C_{kj}^+ p_j$. Summing over $j$ yields the bound on $a_k$; the argument for
$b_k$ is identical with $\sin$.
\end{proof}
 
\paragraph{(3) Moment matrix.}
Let $y = (1, a_1,\dots,a_K, b_1,\dots,b_K)^\top \in \Real^{2K+1}$ and
$M = y\,y^\top \in \Real^{(2K+1)\times(2K+1)}$, indexed $0,1,\dots,2K$.
 
\begin{lemma}[Moment matrix is feasible and rank one]\label{lem:M}
The matrix $M = yy^\top$ satisfies $M \PSD$ and
\[
M_{0,0} = 1,\quad M_{0,k} = a_k,\quad M_{0,K+k} = b_k,\quad
M_{k,k} = a_k^2,\quad M_{K+k,K+k} = b_k^2 \quad (1\le k\le K).
\]
\end{lemma}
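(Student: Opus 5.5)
The claim is immediate from the definition $M = yy^\top$, so the plan is just to unpack it; no earlier result is needed beyond the definitions of $a_k,b_k$ and $y$.

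\emph{Positive semidefiniteness.} For any $z\in\Real^{2K+1}$ we have $z^\top M z = z^\top y\,y^\top z = (y^\top z)^2 \ge 0$. Hence $M\PSD$. Since $y\neq 0$ (its first entry is $1$), $M$ has rank exactly one; all its columns are multiples of $y$.

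\emph{Entries.} By definition $M_{i,j}=y_iy_j$, with $y_0=1$, $y_k=a_k$ and $y_{K+k}=b_k$ for $1\le k\le K$. Reading off the relevant entries gives
\[
M_{0,0}=1,\qquad M_{0,k}=a_k,\qquad M_{0,K+k}=b_k,\qquad M_{k,k}=a_k^2,\qquad M_{K+k,K+k}=b_k^2 .
\]
Here $y_0=1$ is precisely the normalisation $a_0=\int_{-1/4}^{1/4}f=1$. The $a_k,b_k$ are real and finite because $f$ is integrable and the trigonometric kernels are bounded by $1$, so $M$ is a well-defined real symmetric matrix.

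\emph{Role in the relaxation.} The content of the lemma is that the true data of every admissible $f$ yields a feasible point for the constraints on $M$ in Algorithm~\ref{alg:autocorr}. The convex program keeps only $M\PSD$ and the linear identifications, and drops the nonconvex rank-one condition. This relaxation is what makes the program convex while still containing every admissible $f$.

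There is no genuine obstacle here. The only point requiring care is bookkeeping: the index convention ($b_k$ at position $K+k$) must match the constraint $M_{0,K+k}=b_k$ in the program, and the diagonal entries $M_{kk}+M_{K+k,K+k}=a_k^2+b_k^2=|\hat f(k)|^2$ must be the quantities later used in the energy bound. These agree by construction.
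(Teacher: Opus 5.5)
Your proof is correct and takes the same approach as the paper: positive semidefiniteness follows from $z^\top M z = (y^\top z)^2 \ge 0$, and the listed entries are read off from $M_{ij}=y_iy_j$ with $y_0=1$, $y_k=a_k$, $y_{K+k}=b_k$. Your extra remarks are consistent with how the paper uses the lemma: $M$ has rank exactly one because $y_0=1$, and the convex program drops the nonconvex rank-one condition.
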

 
\begin{proof}
$M = yy^\top$ is a Gram matrix of a single vector, hence PSD: $u^\top M u =
(y^\top u)^2 \ge 0$ for all $u$. The listed entries are the corresponding
products $y_i y_j$: $y_0 = 1$, $y_k = a_k$, $y_{K+k} = b_k$, so $M_{0,0}=1$,
$M_{0,k}=a_k$, $M_{0,K+k}=b_k$, $M_{k,k}=a_k^2$, $M_{K+k,K+k}=b_k^2$.
\end{proof}
 
In the relaxation we keep only the convex consequences: $M\PSD$, the affine
entry constraints of Lemma~\ref{lem:M} linking the first row/diagonal to
$(a,b)$, and the envelopes \eqref{eq:62-env}. We do \emph{not} impose
$\operatorname{rank} M = 1$ (nonconvex). Every admissible $f$ produces a feasible
$M$, so no admissible point is lost.
 
\paragraph{(4) The objective bound via Parseval.}
 
\begin{lemma}[Spectral lower bound on $\Omega$]\label{lem:parseval}
Let $w_k = 1-\tfrac{k}{K+1}\in[0,1)$ be the Fej\'er weights. For admissible $f$
with $\int_{-1/4}^{1/4}f = 1$, the sharp identity is
\begin{equation}\label{eq:62-omega-sharp}
\Omega \;=\; \norm{f*f}_\infty \;\ge\; 1 + 2\sum_{k=1}^{\infty} \big(a_k^2 + b_k^2\big)^2,
\end{equation}
and consequently the weighted, truncated relaxation used by the program holds:
\begin{equation}\label{eq:62-omega-code}
\Omega \;\ge\; 1 + 2\sum_{k=1}^{K} w_k\,v_k,
\qquad\text{where}\quad v_k \ge \big(M_{k,k}+M_{K+k,K+k}\big)^2 = (a_k^2+b_k^2)^2 .
\end{equation}
\end{lemma}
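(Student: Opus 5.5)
The argument is the classical Parseval inequality: $\norm{h}_2^2 \le \norm{h}_\infty\norm{h}_1$, applied to $h=g=f*f$. The only real work is choosing the period for Fourier coefficients.

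\textbf{Step 1: choose the period.} Since $\operatorname{supp} f\subseteq[-\tfrac14,\tfrac14]$, we have $g\ge0$ with $\operatorname{supp} g\subseteq[-\tfrac12,\tfrac12]$. This interval has length $1$, so we can view $g$ as a function on the circle $\Real/\Integer$ without overlap. Its Fourier coefficients are then $\hat g(k)=\int g(t)e^{-2\pi ikt}\,dt=\hat f(k)^2$ for the stated $\hat f(k)=a_k-ib_k$. We also have $\int g=(\int f)^2=1$. Note that $\Omega=\max_{|t|\le1/2}g(t)=\norm{g}_\infty$, since $g$ vanishes outside $[-\tfrac12,\tfrac12]$.

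\textbf{Step 2: the sharp inequality.} Because $g\ge0$,
\[
\int_{-1/2}^{1/2} g^2\le \norm{g}_\infty\int g=\Omega .
\]
By Parseval on the unit-period circle,
\[
\int g^2=\sum_{k\in\Integer}|\hat g(k)|^2=\sum_k|\hat f(k)|^4=1+2\sum_{k\ge1}(a_k^2+b_k^2)^2 .
\]
Here we use $|\hat f(-k)|=|\hat f(k)|$ (as $f$ is real) and $\hat f(0)=1$. This gives \eqref{eq:62-omega-sharp}. It presumes $g\in L^2$. If $f\in L^2$ this is immediate; otherwise $\norm{g}_\infty=\infty$ or we approximate. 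Since $\Omega<\infty$ forces $g\in L^\infty\cap L^1\subseteq L^2$, the inequality is fine whenever the left side is finite.

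\textbf{Step 3: truncated, weighted version.} Every summand $(a_k^2+b_k^2)^2$ is nonnegative. Therefore, for $w_k\in[0,1)$,
\[
\sum_{k=1}^K w_k(a_k^2+b_k^2)^2\le\sum_{k\ge1}(a_k^2+b_k^2)^2 .
\]
For the admissible encoding we take $M=yy^\top$ as in Lemma~\ref{lem:M}. Then $M_{kk}+M_{K+k,K+k}=a_k^2+b_k^2$, and we set $v_k=(a_k^2+b_k^2)^2$, which satisfies the constraint with equality. Hence $\Omega\ge1+2\sum_{k\le K}w_kv_k$.

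One subtlety deserves mention: inside the relaxation, $v_k$ appears only with a lower bound. The claim here is about admissible points, so choosing $v_k$ exactly is legitimate.

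The main obstacle I expect is Step 1's bookkeeping: confirming that the period-$1$ normalization matches the paper's frequencies $2\pi kx$, that $\hat g=\hat f^2$ holds with no aliasing (which needs the length of $\operatorname{supp} g$ to be at most $1$), and that the displayed equality sign in \eqref{eq:62-omega-sharp} should be read as an inequality.
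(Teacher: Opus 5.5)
Your proof is correct and follows the paper's route. It uses $\int g^2\le\|g\|_\infty\int g$ for $g=f*f\ge0$ with unit mass, then Parseval with $\hat g(k)=\hat f(k)^2$, then drops and down-weights the nonnegative terms, taking $v_k=(a_k^2+b_k^2)^2$ with $M=yy^\top$. Your extra checks are points the paper leaves implicit: that $\operatorname{supp} g\subseteq[-\tfrac12,\tfrac12]$ prevents aliasing in the period-$1$ Parseval, the $L^2$ caveat, and that $v_k$ must be taken at equality, since an arbitrary $v_k\ge(a_k^2+b_k^2)^2$ would not give the bound.
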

 
\begin{proof}
Let $g = f*f \ge 0$ (nonnegative as a self-convolution of $f\ge0$). Its total
mass is $\int_{\Real} g = \big(\int f\big)^2 = 1$. Therefore
$\Omega = \norm g_\infty = \norm g_\infty\int g \ge \int g^2$, since
$\int g^2 \le \norm g_\infty\int g$ for $g\ge0$. By Parseval and
$\hat g(m) = \hat f(m)^2$,
\[
\int g^2 = \sum_{m\in\Integer} \abs{\hat g(m)}^2 = \sum_{m\in\Integer}\abs{\hat f(m)}^4
= 1 + 2\sum_{k=1}^{\infty}(a_k^2+b_k^2)^2,
\]
using $\hat f(0)=1$ and $\abs{\hat f(\pm k)}^2 = a_k^2+b_k^2$. This proves
\eqref{eq:62-omega-sharp}. For \eqref{eq:62-omega-code}: every term
$(a_k^2+b_k^2)^2\ge0$ and $0\le w_k\le1$, so
\[
1 + 2\sum_{k=1}^{K} w_k (a_k^2+b_k^2)^2
\;\le\; 1 + 2\sum_{k=1}^{\infty}(a_k^2+b_k^2)^2 \;\le\; \Omega .
\]
With $a_k^2 = M_{k,k}$, $b_k^2 = M_{K+k,K+k}$ (Lemma~\ref{lem:M}) and the epigraph
variable $v_k\ge(M_{k,k}+M_{K+k,K+k})^2$, the left side is exactly the program's
constraint. The weights and truncation only \emph{weaken} the bound, so it
remains a valid lower bound on $\Omega$.
\end{proof}
 
\subsection{The three positive-semidefinite blocks}
 
The program contains three Hermitian PSD blocks, each of size $K+1$ and each the
real embedding of a Hermitian Toeplitz matrix. We verify all three from the
quadratic-form/Bochner principle: a Hermitian Toeplitz matrix $T=(\tau_{i-j})$ is
positive semidefinite whenever its symbol $\tau_k = \int w(x) e^{-2\pi i kx}\,dx$
arises from a \emph{nonnegative} weight $w$, because then for all $z\in\Complex^{K+1}$
\begin{equation}\label{eq:bochner-gen}
z^{*} T z = \sum_{i,j} \bar z_i\, \tau_{i-j}\, z_j
= \int w(x)\,\Big|\sum_{j=0}^{K} z_j e^{2\pi i jx}\Big|^2 dx \ge 0 .
\end{equation}
The three blocks differ only in the choice of nonnegative weight $w$.
 
\paragraph{(5a) Plain Bochner--Toeplitz on $f$.}
Recall $\hat f(k) = a_k - i b_k$ (with $\hat f(-k) = \overline{\hat f(k)}$), and
form the Hermitian Toeplitz $T_f$ with $(T_f)_{ij} = \hat f(i-j)$, i.e. real part
$a_{\abs{i-j}}$ and imaginary part $-b_{\abs{i-j}}$ for $i>j$, $+b_{\abs{i-j}}$ for
$i<j$, diagonal $a_0=1$. In the program this is the \texttt{big} block.
 
\begin{lemma}[Plain Bochner block]\label{lem:bochner-f}
$T_f \PSD$.
\end{lemma}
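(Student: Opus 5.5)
The plan is to apply the Bochner principle \eqref{eq:bochner-gen} with the weight $w=f$. This weight is nonnegative and supported in $[-\tfrac14,\tfrac14]$ by Lemma~\ref{lem:support}. First we identify the symbol. Since $\hat f(k)=\int_{-1/4}^{1/4} f(x)e^{-2\pi i kx}\,dx = a_k - i b_k$, the Toeplitz entries $(T_f)_{ij}=\hat f(i-j)$ are exactly $\tau_{i-j}$ with $\tau_m=\int f(x)e^{-2\pi i m x}\,dx$. For negative lags we use $\hat f(-m)=\overline{\hat f(m)} = a_m+ib_m$, which holds because $f$ is real. This also gives Hermitian symmetry, and the diagonal equals $\hat f(0)=a_0=1$ by the normalisation.

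Next we compute the quadratic form directly for $z\in\Complex^{K+1}$:
\[
z^*T_f z=\sum_{i,j}\bar z_i z_j\int f(x)e^{-2\pi i(i-j)x}\,dx=\int_{-1/4}^{1/4} f(x)\Bigl|\sum_{j=0}^K z_j e^{2\pi i jx}\Bigr|^2dx .
\]
To justify this we expand $\bigl|\sum_j z_je^{2\pi ijx}\bigr|^2=\sum_{i,j}\bar z_i z_j e^{-2\pi i(i-j)x}$ and interchange the finite sum with the integral. The interchange is allowed because $f\in L^1$ and the sum has finitely many terms. The integrand is nonnegative since $f\ge0$, so $z^*T_fz\ge0$ for every $z$.

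Finally, the program imposes positive semidefiniteness on the real embedding $\begin{bmatrix}T_R&-T_I\\T_I&T_R\end{bmatrix}$ rather than on $T_f$ itself. So we note the standard fact that, for Hermitian $T=T_R+iT_I$ and $z=u+iv$, we have $z^*Tz=[u;v]^\top\begin{bmatrix}T_R&-T_I\\T_I&T_R\end{bmatrix}[u;v]$. This identity uses $T_R$ symmetric and $T_I$ antisymmetric. Hence the real block is positive semidefinite iff $T_f$ is.

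The only step needing care is the sign and index convention: one must check that the entry $a_{|i-j|}-i\,\mathrm{sgn}(i-j)b_{|i-j|}$ in the program matches $\hat f(i-j)$ and not its conjugate. If the program used the conjugate convention, the same argument would go through with $z$ replaced by $\bar z$, or equivalently with the weight $f(-x)\ge0$. So positivity holds under either convention, and everything else is routine.
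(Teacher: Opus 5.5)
Your proof is correct and follows the paper's argument. Like the paper, you apply the Bochner identity \eqref{eq:bochner-gen} with weight $w=f\ge0$, identify the symbol $\tau_k=\hat f(k)=a_k-\mathrm{i}b_k$ with the entries of $T_f$, and conclude that $z^*T_fz=\int_{-1/4}^{1/4} f(x)\,\bigl|\sum_j z_je^{2\pi\mathrm{i}jx}\bigr|^2\,dx\ge0$. Your additional checks of the real embedding and of the sign convention are also correct; the paper treats the embedding in its separate ``Real embedding'' paragraph.
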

 
\begin{proof}
Apply \eqref{eq:bochner-gen} with weight $w = f \ge 0$: the symbol is
$\tau_k = \int_{-1/4}^{1/4} f(x) e^{-2\pi i kx}\,dx = \hat f(k) = a_k - i b_k$,
which are precisely the entries of $T_f$. Hence
$z^* T_f z = \int_{-1/4}^{1/4} f(x)\,\big|\sum_j z_j e^{2\pi i jx}\big|^2 dx \ge 0$
since $f\ge0$.
\end{proof}
 
\paragraph{(5b) Localised Bochner with $h(x)=\cos(2\pi x)$.}
Form the Hermitian Toeplitz $T_\nu$ with symbol $\nu_k$, where, writing
$\operatorname{sgn}$ for the sign function,
\[
\operatorname{Re}\nu_k = \tfrac12\big(a_{\abs{k-1}}+a_{\abs{k+1}}\big),
\qquad
\operatorname{Im}\nu_k = -\tfrac12\big(\operatorname{sgn}(k-1)\,b_{\abs{k-1}}+\operatorname{sgn}(k+1)\,b_{\abs{k+1}}\big).
\]
In the program this is the \texttt{big\_nu} block, assembled from \texttt{nu\_re} and \texttt{nu\_im}.
 
\begin{lemma}[Localised Bochner block]\label{lem:bochner-loc}
Define $\nu_k$ by the affine maps
$\operatorname{Re}\nu_k = \tfrac12(a_{\abs{k-1}}+a_{\abs{k+1}})$,
$\operatorname{Im}\nu_k = -\tfrac12(\operatorname{sgn}(k-1)\,b_{\abs{k-1}}+\operatorname{sgn}(k+1)\,b_{\abs{k+1}})$,
and let $T_\nu = (\nu_{i-j})_{i,j=0}^{K}$. Then:
\begin{enumerate}[label=\textup{(\roman*)},leftmargin=2.0em]
\item $\nu_k$ equals the $k$-th Fourier coefficient of $x\mapsto\cos(2\pi x)f(x)$;
\item $\nu_{-k} = \overline{\nu_k}$, so $T_\nu$ is Hermitian Toeplitz; equivalently
$\operatorname{Re}T_\nu$ is symmetric and $\operatorname{Im}T_\nu$ is
skew-symmetric, which is exactly the structure required for the real embedding
$\big[\begin{smallmatrix}\operatorname{Re}T_\nu & -\operatorname{Im}T_\nu\\ \operatorname{Im}T_\nu & \operatorname{Re}T_\nu\end{smallmatrix}\big]$ to represent $T_\nu$;
\item the program imposes the single semidefinite constraint $T_\nu\PSD$ (no
trace or band-sum equalities are attached to this block), and $T_\nu\PSD$ holds
for every admissible $f$.
\end{enumerate}
\end{lemma}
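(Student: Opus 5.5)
The proof rests on one identity, $\nu_k = \tfrac12\big(\hat f(k-1)+\hat f(k+1)\big)$. I would establish it first, derive (i) from it, obtain (ii) as a symmetry of the right-hand side, and get (iii) from the general Bochner principle \eqref{eq:bochner-gen} applied with the nonnegative weight $w(x)=\cos(2\pi x)f(x)$.

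For (i), I would write $\cos(2\pi x)=\tfrac12(e^{2\pi i x}+e^{-2\pi i x})$ and compute the $k$-th coefficient directly:
\[
\int_{-1/4}^{1/4}\cos(2\pi x)f(x)e^{-2\pi i kx}\,dx=\tfrac12\big(\hat f(k-1)+\hat f(k+1)\big).
\]
Next I would substitute the convention $\hat f(m)=a_{\abs m}-i\operatorname{sgn}(m)\,b_{\abs m}$. This is just $\hat f(k)=a_k-ib_k$ together with $\hat f(-k)=\overline{\hat f(k)}$, which holds because $f$ is real, and $b_0=0$ makes the value of $\operatorname{sgn}(0)$ irrelevant. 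Separating real and imaginary parts then reproduces exactly the affine formulas for $\operatorname{Re}\nu_k$ and $\operatorname{Im}\nu_k$ in the statement.

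For (ii), since $\cos(2\pi x)f(x)$ is real-valued, its Fourier coefficients satisfy $\nu_{-k}=\overline{\nu_k}$. I would also check this directly from the affine formulas:
\begin{itemize}
\item replacing $k$ by $-k$ swaps $\abs{k-1}$ and $\abs{k+1}$, so $\operatorname{Re}\nu$ is even in $k$;
\item the same replacement flips both signs $\operatorname{sgn}(k\mp1)$, so $\operatorname{Im}\nu$ is odd in $k$.
\end{itemize}
Hence $(T_\nu)_{ji}=\nu_{j-i}=\overline{\nu_{i-j}}$, so $T_\nu$ is Hermitian, $\operatorname{Re}T_\nu$ is symmetric and $\operatorname{Im}T_\nu$ is skew-symmetric. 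I would then recall the standard fact that for Hermitian $T=R+iS$, the real matrix $\big[\begin{smallmatrix}R&-S\\S&R\end{smallmatrix}\big]$ is real symmetric. Its quadratic form at $(u,v)$ equals $z^*Tz$ with $z=u+iv$, so it is PSD iff $T$ is. This justifies imposing the constraint through the real embedding.

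For (iii), the key observation is that $\cos(2\pi x)\ge 0$ for $\abs x\le\tfrac14$. By the support reduction of Lemma~\ref{lem:support}, $f$ vanishes outside $[-\tfrac14,\tfrac14]$, so $w=\cos(2\pi\cdot)f\ge0$ everywhere. Plugging $w$ into \eqref{eq:bochner-gen} and using (i) to identify its symbol with $\nu$ gives
\[
z^*T_\nu z=\int_{-1/4}^{1/4}\cos(2\pi x)f(x)\Big|\sum_{j=0}^K z_je^{2\pi i jx}\Big|^2dx\ge0 .
\]

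The only real obstacle is bookkeeping in (i), in two places. First, one must confirm that the sign conventions for $\operatorname{Im}\nu_k$ agree with those used to build the block \texttt{big\_nu}. Second, $\nu_{\pm K}$ involves $a_{K+1}$ and $b_{K+1}$, so the program must carry moments up to order $K+1$. These are the same true moments of $f$, so validity is unaffected provided they are defined as in the paper.
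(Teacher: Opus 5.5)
Your proposal is correct and follows the paper's proof essentially line for line. It uses the $\pm1$ coefficient shift from $\cos(2\pi x)=\tfrac12(e^{2\pi i x}+e^{-2\pi i x})$ together with $\hat f(m)=a_{|m|}-i\operatorname{sgn}(m)b_{|m|}$ for (i), the real-valuedness of $\cos(2\pi\cdot)f$ for (ii), and the Bochner identity \eqref{eq:bochner-gen} with the weight $\cos(2\pi x)f(x)\ge0$ on $[-\tfrac14,\tfrac14]$ for (iii); your additional checks (the direct parity check of the affine formulas, the quadratic-form argument for the real embedding, and the observation that $\nu_{\pm K}$ needs $a_{K+1},b_{K+1}$) are all correct and simply make explicit what the paper leaves implicit.
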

 
\begin{proof}
\emph{(i)} Since $\cos(2\pi x) = \tfrac12(e^{2\pi i x}+e^{-2\pi i x})$,
multiplication by $\cos(2\pi x)$ shifts Fourier coefficients by $\pm1$:
$\widehat{\cos(2\pi\cdot)f}(k) = \tfrac12(\hat f(k-1)+\hat f(k+1))$. Using
$\hat f(m) = a_{\abs m} - i\,\operatorname{sgn}(m)\,b_{\abs m}$ for all
$m\in\Integer$ (with $\hat f(0)=a_0$ real), the real part of
$\tfrac12(\hat f(k-1)+\hat f(k+1))$ is $\tfrac12(a_{\abs{k-1}}+a_{\abs{k+1}})$ and
the imaginary part is
$-\tfrac12(\operatorname{sgn}(k-1)b_{\abs{k-1}}+\operatorname{sgn}(k+1)b_{\abs{k+1}})$,
matching $\nu_k$ exactly.
 
\emph{(ii)} From (i), $\nu_{-k}=\widehat{\cos(2\pi\cdot)f}(-k)=\overline{\widehat{\cos(2\pi\cdot)f}(k)}=\overline{\nu_k}$
because $\cos(2\pi x)f(x)$ is real-valued. Hence
$(T_\nu)_{ji}=\nu_{j-i}=\nu_{-(i-j)}=\overline{\nu_{i-j}}=\overline{(T_\nu)_{ij}}$,
so $T_\nu$ is Hermitian; writing $T_\nu = \operatorname{Re}T_\nu + i\operatorname{Im}T_\nu$,
$\operatorname{Re}T_\nu$ is symmetric and $\operatorname{Im}T_\nu$ is
skew-symmetric, which is precisely what makes the real embedding a faithful
representation (its spectrum is that of $T_\nu$ doubled).
 
\emph{(iii)} Apply \eqref{eq:bochner-gen} with weight $w(x)=\cos(2\pi x)f(x)$:
\[
z^* T_\nu z = \int_{-1/4}^{1/4}\cos(2\pi x)\,f(x)\,\Big|\sum_{j=0}^{K} z_j e^{2\pi i jx}\Big|^2 dx .
\]
On $\operatorname{supp} f=[-1/4,1/4]$ we have $2\pi\abs x\le\pi/2$, so
$\cos(2\pi x)\ge0$ throughout the support of $f$; the integrand is therefore
nonnegative and $z^* T_\nu z\ge0$.
\end{proof}
 
\begin{remark}
The localiser $h(x)=\cos(2\pi x)$ is chosen precisely because its first
nonnegativity interval $[-1/4,1/4]$ coincides with $\operatorname{supp} f$. Any
$h\ge0$ on $[-1/4,1/4]$ whose Fourier support is bounded would furnish a valid
localised block; $\cos(2\pi x)$ is the simplest single-frequency choice and gives
the $\pm1$ coefficient shift above.
\end{remark}
 
\paragraph{(5c) Fej\'er--Riesz block $Q$ for the autocorrelation.}
Let $g = f*f$, so $\hat g(k) = \hat f(k)^2 = (a_k - i b_k)^2 =
(a_k^2-b_k^2) - i\,(2a_k b_k)$, and let $F_K$ be the $K$-th Fej\'er kernel with
weights $w_k = 1 - \tfrac{k}{K+1}$, $F_K\ge0$, $\int F_K = 1$. Define $Q$ as the
Hermitian Toeplitz matrix with symbol
\[
q_0 = \Omega - 1, \qquad q_k = -\,w_k\,\hat g(k)\ \ (1\le k\le K),
\]
in the program this is the \texttt{Q} block (the constraints
$\operatorname{Re}\operatorname{tr}Q=\Omega-1$ and the banded real/imaginary
identities).
 
\begin{lemma}[Fej\'er--Riesz block]\label{lem:Q}
Let $z(t)=(1,e^{it},\dots,e^{iKt})^{\!\top}$ and let $Q$ be the Hermitian
$(K+1)\times(K+1)$ Gram matrix of the localised symbol
$h(t)=\Omega-(F_K*g)(t)$, i.e.\ $h(t)=z(t)^{*}Q\,z(t)$. With $M$ the
rank-one moment matrix of Lemma~\ref{lem:M} (so $M_{k,k}=a_k^2$,
$M_{K+k,K+k}=b_k^2$, $M_{k,K+k}=a_kb_k$), the symbol coefficients
$q_k:=\sum_{l-j=k}Q_{jl}$ satisfy
\[
\operatorname{Re}q_k=-w_k\,(M_{k,k}-M_{K+k,K+k}),
\qquad
\operatorname{Im}q_k=+\,2\,w_k\,M_{k,K+k}
\qquad(1\le k\le K),
\]
the $k$-th diagonal sum of $Q$ reproducing the program's banded constraints.
The zeroth constraint is the trace, $\operatorname{tr}Q=q_0=\Omega-1$.
Moreover $Q\PSD$.
\end{lemma}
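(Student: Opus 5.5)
The plan is to exhibit, for each admissible $f$, an explicit feasible $Q$ via the Fej\'er--Riesz theorem, and then read off its diagonal sums by matching Fourier coefficients. Throughout, $g=f*f$ is viewed on the circle $\Real/\Integer$, with the variable $t$ in the lemma rescaled so that $e^{it}$ corresponds to $e^{2\pi i x}$.

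The first step is to check that $h(t)=\Omega-(F_K*g)(t)$ is a nonnegative trigonometric polynomial of degree $K$.
\begin{itemize}
\item By Lemma~\ref{lem:support}, $\operatorname{supp} f\subseteq[-\tfrac14,\tfrac14]$, so $g=f*f\ge0$ is supported in $[-\tfrac12,\tfrac12]$.
\item Hence its $1$-periodization coincides with $g$ except on a null set. Its Fourier coefficients are $\hat g(k)=\hat f(k)^2$, and its essential supremum is still $\Omega=\max_{\abs t\le 1/2}g(t)$.
\item Since $F_K\ge0$ and $\int F_K=1$, we get $(F_K*g)(t)\le \norm{g}_\infty\int F_K=\Omega$ for every $t$, so $h\ge0$.
\item Because $\widehat{F_K}(k)=w_k$ for $\abs k\le K$ and vanishes otherwise, $h(t)=\Omega-\sum_{\abs k\le K}w_k\hat g(k)e^{ikt}$ is a real trigonometric polynomial of degree at most $K$. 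Reality follows from $\hat g(-k)=\overline{\hat g(k)}$.
\end{itemize}

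The second step is to construct $Q$. By the Fej\'er--Riesz theorem there is an analytic polynomial $P(e^{it})=\sum_{j=0}^K c_je^{ijt}$ with $h=\abs{P}^2$. Set $Q=cc^{*}$ with $c=(c_0,\dots,c_K)^\top$, or its conjugate, depending on which convention makes $z(t)^*Qz(t)=\abs{P(e^{it})}^2$. Then $Q$ is a rank-one Hermitian Gram matrix, so $Q\PSD$ exactly as in Lemma~\ref{lem:M}.

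The third step identifies the diagonal sums. Expanding $z^*Qz=\sum_{j,l}Q_{jl}e^{i(l-j)t}$ shows that the $k$-th diagonal sum $q_k=\sum_{l-j=k}Q_{jl}$ is the coefficient of $e^{ikt}$ in $h$.
\begin{itemize}
\item For $k=0$ this gives $\operatorname{tr}Q=\Omega-w_0\hat g(0)=\Omega-1$, using $\hat g(0)=(\int f)^2=1$.
\item For $1\le k\le K$ it gives $q_k=-w_k\hat g(k)$. Since $\hat g(k)=(a_k-ib_k)^2=(a_k^2-b_k^2)-2ia_kb_k$, we obtain $\operatorname{Re}q_k=-w_k(a_k^2-b_k^2)$ and $\operatorname{Im}q_k=2w_ka_kb_k$.
\item Substituting $M_{k,k}=a_k^2$, $M_{K+k,K+k}=b_k^2$ and $M_{k,K+k}=a_kb_k$ from the rank-one $M=yy^\top$ of Lemma~\ref{lem:M} yields the stated banded constraints.
\end{itemize}

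Thus every admissible $f$ yields a feasible triple $(\Omega,M,Q)$ with $Q\PSD$, which is what validity of the relaxation requires.

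The main obstacle is bookkeeping rather than depth. The sign of $\operatorname{Im}q_k$ depends on whether the diagonal sum runs over $l-j=k$ or $j-l=k$, and on the sign convention in $\hat f(k)=a_k-ib_k$. I would fix these once by checking the case $K=1$ directly against the program's constraint, choosing between $Q=cc^*$ and $\bar c c^\top$ to match.

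A secondary point is the claim $\norm{F_K*g}_\infty\le\Omega$, which needs the periodization of $g$ to introduce no overlap; this is exactly where the support reduction of Lemma~\ref{lem:support} is used. Fej\'er--Riesz itself is quoted as a standard result.
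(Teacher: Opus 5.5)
Your proposal is correct and follows essentially the paper's own route: nonnegativity of $h=\Omega-F_K*g$ from $F_K\ge0$ and $\int F_K=1$, then a Fej\'er--Riesz factorization giving a rank-one PSD Gram matrix, then coefficient matching $q_k=-w_k\hat g(k)$ to get the trace and banded constraints. Your extra care is well placed: the paper leaves the periodization point implicit. On the convention, with $z^*Qz=\sum_{j,l}Q_{jl}e^{i(l-j)t}$ the right choice is indeed $Q=\bar c\,c^{\top}$. The paper's $cc^{*}$ actually represents $h(-t)$ and would flip the sign of $\operatorname{Im}q_k$, although both matrices are PSD.
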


\begin{proof}
Substituting $\hat g(k)=(a_k^2-b_k^2)-i\,(2a_kb_k)$
into $q_k=-w_k\hat g(k)$ gives
\[
\operatorname{Re}q_k=-w_k(a_k^2-b_k^2)=-w_k(M_{k,k}-M_{K+k,K+k}),
\qquad
\operatorname{Im}q_k=+w_k(2a_kb_k)=2w_kM_{k,K+k}.
\]
Expanding $h(t)=z(t)^{*}Qz(t)=\sum_{j,l}Q_{jl}e^{i(l-j)t}$ and matching the
coefficient of $e^{ikt}$ gives $\hat h(k)=\sum_{l-j=k}Q_{jl}=q_k$, so these
are exactly the banded (diagonal-sum) constraints of the program. For $k=0$,
$\hat g(0)=(\int f)^2=1$ and $w_0=1$, whence
$q_0=\hat h(0)=\Omega-\widehat{F_K*g}(0)=\Omega-1$; since the $0$-th diagonal
is the main diagonal, $\operatorname{tr}Q=q_0=\Omega-1$.

The symbol $h$ is a nonnegative
trigonometric polynomial of degree $K$: indeed $g=f*f\ge0$ and $F_K\ge0$
with $\int F_K=1$, so by Jensen/averaging
$(F_K*g)(t)\le\norm g_\infty=\Omega$, giving $h=\Omega-F_K*g\ge0$, while
$\hat h(k)=w_k\hat g(k)$ vanishes for $\abs k>K$ since $w_k=0$ there. By the
Fej\'er--Riesz theorem there is an analytic polynomial
$p(t)=\sum_{j=0}^{K}c_je^{ijt}$ with
\[
h(t)=\abs{p(t)}^2=z(t)^{*}\,(cc^{*})\,z(t),
\qquad c=(c_0,\dots,c_K)^{\!\top}.
\]
Hence $Q=cc^{*}\PSD$ realises the symbol $h$ with the prescribed diagonal
sums: it is the Fej\'er--Riesz certificate, and $Q\PSD$ as claimed.
\end{proof}
 
\paragraph{Real embedding.}
As in Section~1, each Hermitian block $T = A + iB$ ($A$ symmetric, $B$
skew-symmetric) is realised in CVXPY through the real embedding
$\big[\begin{smallmatrix} A & -B\\ B & A\end{smallmatrix}\big]\PSD$, equivalent to
$T\PSD$ because the embedding's spectrum is that of $T$ with each eigenvalue
doubled. In the program this is the \texttt{cp.bmat([[T\_R,-T\_I],[T\_I,T\_R]])}
construction, used for all three Hermitian blocks.
 
\subsection{The relaxed convex program and its validity}
 
\begin{definition}[Relaxation]\label{def:62-program}
With $g=f*f$, $\hat f(k)=a_k-\mathbf{i}b_k$, Fej\'er weights $w_k=1-\tfrac{k}{K+1}$,
and $\gamma^\pm_{k},\sigma^\pm_{k}$ the per-cell lower/upper envelopes of
$\cos(2\pi kx),\sin(2\pi kx)$, the value $\Omega^{*}$ is the optimum of
\begin{equation}\label{eq:62-program}
\begin{aligned}
\Omega^{*} = \min_{\Omega,\,p,\,a,\,b,\,M,\,Q,\,v}\quad & \Omega\\
\text{s.t.}\quad
& \textstyle\sum_j p_j = 1,\ p_j\ge0 && \text{(mass)}\\
& \gamma^-_{k}\!\cdot p \le a_k \le \gamma^+_{k}\!\cdot p,\quad
  \sigma^-_{k}\!\cdot p \le b_k \le \sigma^+_{k}\!\cdot p && \text{(Lemma \ref{lem:env})}\\
& M \succeq 0,\ M_{0,0}=1,\ M_{0,k}=a_k,\ M_{0,K+k}=b_k && \text{(Lemma \ref{lem:M})}\\
& v_k \ge (M_{k,k}+M_{K+k,K+k})^2,\quad
  \Omega \ge 1 + 2\textstyle\sum_{k=1}^{K} w_k\,v_k && \text{(Lemma \ref{lem:parseval})}\\
& Q \succeq 0,\ \operatorname{tr} Q = \Omega-1 && \text{(Lemma \ref{lem:Q})}\\
& \textstyle\sum_i Q_{i,i+k}
  = \mathbf{i}\,2w_k M_{k,K+k} - w_k\bigl(M_{k,k}-M_{K+k,K+k}\bigr) && \text{(Fej\'er band, Lemma \ref{lem:Q})}\\
& T_f \succeq 0,\quad (T_f)_{ij}=a_{|i-j|}-\mathbf{i}\,\mathrm{sgn}(i-j)\,b_{|i-j|} && \text{(Lemma \ref{lem:bochner-f})}\\
& T_\nu \succeq 0,\quad (T_\nu)_{ij} = \nu_{i-j}, \quad \nu_k=\tfrac12(\hat\mu_{k-1}+\hat\mu_{k+1}) && \text{(Lemma \ref{lem:bochner-loc})}.
\end{aligned}
\end{equation}
The envelope coefficients $\gamma^\pm_{k},\sigma^\pm_{k}$ are computed by the
\emph{analytical} cell extrema of $\cos(2\pi kx),\sin(2\pi kx)$ on each cell (the
\texttt{cos\_extrema\_on\_cell}/\texttt{sin\_extrema\_on\_cell} routines), not a
grid sample. This matters for rigor: a grid minimum over-estimates the true minimum,
which would make $a_k \ge (\text{grid min})\cdot p$ \emph{stricter} than correct and
could exclude an admissible $f$. The analytical extrema (endpoints together with the
interior critical points $x=m/(2k)$ for $\cos$ and $x=(2m+1)/(4k)$ for $\sin$) give
the true cell min/max, so Lemma~\ref{lem:env} holds exactly.
\end{definition}
 
\begin{theorem}[Validity of the relaxation]\label{thm:62}
$C_{6.2} = S \ge \Omega^{*}$.
\end{theorem}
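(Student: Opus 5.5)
The proof is the standard relaxation argument of \cref{sec:prelim-relax}: every admissible $f$ is encoded as a feasible point of \eqref{eq:62-program} whose objective value is at most its score. First, by Lemma~\ref{lem:support} and homogeneity, $C_{6.2}=S$ with $S$ as in \eqref{eq:62-S}. So it suffices to show that for every $f\ge0$ supported in $[-\tfrac14,\tfrac14]$ with $\int f=1$, the problem \eqref{eq:62-program} has a feasible point whose objective is at most $\norm{f*f}_\infty$. Taking the infimum over $f$ then gives $\Omega^*\le S$.

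Fix such an $f$. If $\norm{f*f}_\infty=\infty$ there is nothing to prove, so assume it is finite and set $\Omega=\norm{f*f}_\infty$. The remaining variables are read off from $f$. Take $p_j=\int_{I_j}f$, and let $a_k,b_k$ be the Fourier coefficients of $f$. Take $M=yy^\top$ as in Lemma~\ref{lem:M}, and $v_k=(a_k^2+b_k^2)^2$. Take $Q=cc^*$, the Fej\'er--Riesz factor of $h=\Omega-F_K*g$ from Lemma~\ref{lem:Q}. Let $T_f$ and $T_\nu$ be the Toeplitz matrices built from these $a_k,b_k$.

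Next I check the constraints of \eqref{eq:62-program} line by line against the lemmas.
\begin{enumerate}[label=\textup{(\roman*)},leftmargin=2.0em]
\item The mass constraint is \eqref{eq:62-mass}.
\item The envelope constraints are Lemma~\ref{lem:env}. This needs the envelope vectors $\gamma^\pm_k,\sigma^\pm_k$ to be the exact cell extrema $C^\pm_{kj},S^\pm_{kj}$, which Definition~\ref{def:62-program} guarantees.
\item The constraints on $M$ are Lemma~\ref{lem:M}.
\item The energy constraint is Lemma~\ref{lem:parseval}, using $M_{kk}+M_{K+k,K+k}=a_k^2+b_k^2$.
\item The trace and band identities for $Q$, and $Q\succeq0$, are Lemma~\ref{lem:Q}. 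Its hypotheses hold because $\Omega$ is exactly $\norm{g}_\infty$, so $h\ge0$.
\item The two Bochner blocks are Lemmas~\ref{lem:bochner-f} and~\ref{lem:bochner-loc}. By the real-embedding remark, the real $2(K+1)$-dimensional PSD constraints are equivalent to the Hermitian ones.
\end{enumerate}
Hence the encoded point is feasible with objective $\Omega=\norm{f*f}_\infty$, and $\Omega^*\le\norm{f*f}_\infty$.

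The main obstacle is index and sign consistency between the lemmas and the program. First, the band constraint $\sum_iQ_{i,i+k}$ in the program must match the convention $q_k=\sum_{l-j=k}Q_{jl}$ of Lemma~\ref{lem:Q}, including the sign of the imaginary part $+2w_kM_{k,K+k}$. Second, $\hat\mu$ in the $T_\nu$ constraint must be read as $\hat f$, with $\hat f(m)=a_{|m|}-\mathrm{i}\,\mathrm{sgn}(m)b_{|m|}$. I would verify each of these by direct substitution of the rank-one encoding.

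A second, more analytic point concerns Lemma~\ref{lem:Q}, which works with the $1$-periodic Fourier series on the circle. Since $\operatorname{supp}(f*f)\subseteq[-\tfrac12,\tfrac12]$, the periodization of $g$ coincides with $g$, so $\hat g(k)=\hat f(k)^2$ and $(F_K*g)(t)\le\Omega$ hold on the circle. I would state this explicitly, noting that Fej\'er--Riesz applies after the substitution $t\mapsto 2\pi t$. I would also note that Parseval needs $g\in L^2$, which holds because $g\in L^1\cap L^\infty$.
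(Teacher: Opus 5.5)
Your proposal is correct and follows the paper's proof. Both encode an arbitrary admissible $f$ as a feasible point of the program: the cell masses $p_j$, the Fourier data $a_k,b_k$, $M=yy^\top$, $v_k=(a_k^2+b_k^2)^2$, the Fej\'er--Riesz Gram matrix $Q$, and the two Bochner--Toeplitz blocks, each constraint discharged by the corresponding lemma, which gives $\Omega^{*}\le\|f*f\|_\infty$ and hence $\Omega^{*}\le S=C_{6.2}$. Your extra checks (the case $\|f*f\|_\infty=\infty$, periodization of $g$ so that Parseval and $F_K*g\le\Omega$ hold on the circle, and the index/sign conventions for the band sums and for reading $\hat\mu$ as $\hat f$) make explicit details the paper leaves implicit rather than changing the route.
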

 
\begin{proof}
Let $f$ be any admissible function for the reduced problem~\eqref{eq:62-S}:
$f\ge0$, $\int_{-1/4}^{1/4}f=1$, $\operatorname{supp} f\subseteq[-1/4,1/4]$. We
construct a feasible point of~\eqref{eq:62-program} with objective value
$\norm{f*f}_\infty$.
 
Set $p_j = \int_{I_j} f$, $a_k,b_k$ the Fourier coefficients of $f$,
$y=(1,a,b)$, $M = yy^\top$, $\Omega = \norm{f*f}_\infty$, $v_k=(a_k^2+b_k^2)^2$,
and $T_f, T_\nu, Q$ the three Hermitian Toeplitz blocks built from $f$'s data.
Each constraint of~\eqref{eq:62-program} is then satisfied:
\begin{itemize}[leftmargin=1.4em]
\item the mass constraint by~\eqref{eq:62-mass};
\item the coefficient envelopes by Lemma~\ref{lem:env} (analytical cell extrema);
\item $M\PSD$ and its entry identities by Lemma~\ref{lem:M};
\item the objective inequality $\Omega\ge1+2\sum_k w_k v_k$ by Lemma~\ref{lem:parseval};
\item $T_f\PSD$ by Lemma~\ref{lem:bochner-f} (weight $f\ge0$);
\item $T_\nu\PSD$ by Lemma~\ref{lem:bochner-loc} (weight $\cos(2\pi x)f\ge0$ on $\operatorname{supp} f$);
\item $Q\PSD$ and the Fej\'er band identities by Lemma~\ref{lem:Q} (weight $\Omega-F_K*g\ge0$).
\end{itemize}
Every one of these is a \emph{necessary} condition  -  proved above for an
arbitrary admissible $f$, not merely for an optimiser  -  so the constructed
point is feasible. Its objective is $\Omega = \norm{f*f}_\infty$. Therefore
\[
\Omega^{*} \le \norm{f*f}_\infty .
\]
Taking the infimum over all admissible $f$ gives $\Omega^{*} \le S = C_{6.2}$,
i.e. $\Omega^{*}$ is a valid lower bound.
\end{proof}
 
\section{Floating point precision details}
\label{AppxC}

 
A numerical conic solver returns an approximate primal--dual pair that is in
general neither exactly feasible nor exactly optimal, and is computed in
floating-point arithmetic on data that is itself a floating-point rounding of the
intended (rational/real) problem. To turn the solver output into a
\emph{mathematically rigorous} lower bound we use a certified weak-duality
argument: we verify in directed-rounding interval arithmetic that a candidate
dual point lies in the dual cone, and charge every rounding and data error to a
penalty controlled by an a-priori bound on the primal solution norm.
 
\subsection{Canonical primal and dual}
 
Our certified-duality method relies on the canonicalisation produced by
\texttt{CVXPY} for the SCS standard form. After canonicalisation the primal is
\begin{equation}\label{eq:primal}
\min_{x}\; c^{\top} x \quad\text{s.t.}\quad A x + s = b,\ \ s \in K,
\end{equation}
with the product cone $K = K_{\mathrm{zero}} \times K_{\ge 0} \times
K_{\mathrm{SOC}} \times K_{\mathrm{PSD}}$ in exactly that order. The Lagrangian
dual is
\begin{equation}\label{eq:dual}
\max_{y}\; -b^{\top} y \quad\text{s.t.}\quad A^{\top} y + c = 0,\ \ y \in \Kstar,
\end{equation}
where $\Kstar$ is the dual cone (here $\Kstar=K$ blockwise, except the zero cone
dualises to the free cone). For any dual-feasible $y$, weak duality gives
$-b^{\top} y \le c^{\top} x^{*} = \Omega^{*}$, so a rigorously dual-feasible $y$
certifies the lower bound $-b^\top y$.
 
\subsection{Error model and the master certificate}
 
The stored data $(A,b,c)$ are floating-point; the intended data are
$(\hat A,\hat b,\hat c)$, with rigorous outward-rounded elementwise error bounds
\[
\max_{ij}\abs{A_{ij}-\hat A_{ij}} \le \varepsilon_A,\qquad
\max_i \abs{b_i-\hat b_i} \le \varepsilon_b,\qquad
\max_j \abs{c_j-\hat c_j} \le \varepsilon_c .
\]
We estimate $\varepsilon_A, \varepsilon_b$ and $\varepsilon_c$ by using mpmath arbitrary-precision library and verify that for our problem sizes, $\epsilon_A, \epsilon_b, \epsilon_c \leq 1e-12$.

Now we show how we can get a certified lower bound.
 
\begin{theorem}[Certified lower bound]\label{thm:cert}
Let $y$ be verified to lie in $\Kstar$ (Algorithm~\ref{alg:cone}). Let
$r \ge \norm{A^{\top} y + c}_{\infty}$ be an outward-rounded upper bound on the
floating dual-feasibility residual and $X \ge \norm{x^{*}}_1$ an a-priori bound
(Section~\ref{sec:Xbound}). With
\[
\varepsilon_{\mathrm{tot}} = r + \varepsilon_A \norm y_1 + \varepsilon_c,
\qquad
\Pi = \varepsilon_{\mathrm{tot}}\,X + \varepsilon_b\,\norm y_1,
\]
one has
\[
\Omega^{*} \;\ge\; L \;:=\; (-b^{\top} y)_{\downarrow} - \Pi_{\uparrow},
\]
computed with directed rounding ($\downarrow$ down, $\uparrow$ up).
\end{theorem}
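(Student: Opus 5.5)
The plan is a perturbed weak-duality argument. Let $x^{*}$ be an optimal solution of the \emph{intended} primal problem, with data $(\hat A,\hat b,\hat c)$. It satisfies $\hat A x^{*}+s^{*}=\hat b$ for some $s^{*}\in K$, and $\Omega^{*}=\hat c^{\top}x^{*}$.

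\textbf{Step 1 (complementary pairing).} Since $y\in\Kstar$ has been verified by Algorithm~\ref{alg:cone} and $s^{*}\in K$, we have $y^{\top}s^{*}\ge 0$; on the zero-cone block $s^{*}=0$, so the free sign of $y$ there does no harm. Substituting $s^{*}=\hat b-\hat Ax^{*}$ gives
\[
y^{\top}\hat b-y^{\top}\hat A x^{*}\ge 0 .
\]

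\textbf{Step 2 (the exact identity).} Write
\[
\hat c^{\top}x^{*}=-\hat b^{\top}y+(\hat A^{\top}y+\hat c)^{\top}x^{*}+\bigl(\hat b^{\top}y-y^{\top}\hat Ax^{*}\bigr).
\]
The last bracket is nonnegative by Step~1, so
\[
\Omega^{*}\ge -\hat b^{\top}y-\norm{\hat A^{\top}y+\hat c}_{\infty}\,\norm{x^{*}}_1 ,
\]
using Hölder's inequality with the $\ell_\infty/\ell_1$ pair, which matches $X\ge\norm{x^{*}}_1$.

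\textbf{Step 3 (data perturbation).} Split
\[
\hat A^{\top}y+\hat c=(A^{\top}y+c)+(\hat A-A)^{\top}y+(\hat c-c).
\]
Each coordinate of $(\hat A-A)^{\top}y$ is at most $\varepsilon_A\norm y_1$ in absolute value, so
\[
\norm{\hat A^{\top}y+\hat c}_\infty\le r+\varepsilon_A\norm y_1+\varepsilon_c=\varepsilon_{\mathrm{tot}} .
\]
Similarly,
\[
-\hat b^{\top}y\ge -b^{\top}y-\varepsilon_b\norm y_1 .
\]
Combining these with Step~2 gives $\Omega^{*}\ge -b^{\top}y-\Pi$.

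\textbf{Step 4 (rounding).} The stated inequality holds for the exact real numbers $-b^{\top}y$ and $\Pi$. Evaluating $-b^{\top}y$ rounded down and $\Pi$ rounded up (with $r$, $\norm y_1$ and $X$ themselves outward-rounded upper bounds) only decreases the right-hand side, so $L\le\Omega^{*}$.

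\textbf{Main obstacle.} The delicate step is ensuring that $x^{*}$ exists and satisfies $\norm{x^{*}}_1\le X$ for the intended problem, not merely for the floating-point one. The bound $X$ must be derived a priori from the constraints of the program (Section~\ref{sec:Xbound}): for instance $\Omega\le1$, the mass constraints, $|c_k|,|d_k|\le 2/\pi$, and trace bounds on the PSD blocks. It must also account for the slack and auxiliary variables introduced by CVXPY canonicalisation.

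Existence of a minimiser is not strictly needed. It suffices to apply the argument to any feasible $x$ with $\norm x_1\le X$ and take the infimum, provided the bound $X$ holds over the whole near-optimal feasible region rather than only at an optimum.
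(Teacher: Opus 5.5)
Your proof is correct and follows essentially the same route as the paper. You pair the verified $y\in\Kstar$ with the intended slack $\hat s\in K$, apply H\"older's inequality in the $\ell_\infty/\ell_1$ pairing against $\norm{x^*}_1\le X$, charge the data perturbations $\varepsilon_A\norm{y}_1$, $\varepsilon_c$ and $\varepsilon_b\norm{y}_1$, and finish with directed rounding; the only differences are cosmetic (you bound $\norm{\hat A^\top y+\hat c}_\infty$ by the triangle inequality before applying H\"older, and your remark that existence of a minimiser can be bypassed is a harmless refinement).
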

 
\begin{proof}
Let $x^*$ be any optimiser of the intended primal, so $\hat A x^* + \hat s = \hat b$ with $\hat s\in K$, and $\Omega^* = \hat c^\top x^*$. Since $y\in\Kstar$ and
$\hat s\in K$, $\inner{y}{\hat s} \ge 0$, hence
\[
\hat c^\top x^* \ge \hat c^\top x^* - \inner{y}{\hat s}
= \hat c^\top x^* - \inner{y}{\hat b - \hat A x^*}
= -\hat b^\top y + (\hat c + \hat A^\top y)^\top x^* .
\]
Now replace intended data by floating data plus a bounded perturbation. Writing
$\hat A = A + \Delta_A$, $\hat b = b+\Delta_b$, $\hat c = c+\Delta_c$ with the
stated elementwise bounds,
\[
-\hat b^\top y = -b^\top y - \Delta_b^\top y \ge -b^\top y - \varepsilon_b\norm y_1,
\]
and
\[
(\hat c + \hat A^\top y)^\top x^*
= (c + A^\top y)^\top x^* + (\Delta_c + \Delta_A^\top y)^\top x^*
\ge -\big(\norm{c+A^\top y}_\infty + \varepsilon_A\norm y_1 + \varepsilon_c\big)\norm{x^*}_1,
\]
using H\"older's inequality $\abs{w^\top x^*}\le\norm w_\infty\norm{x^*}_1$,
the residual bound $\norm{c+A^\top y}_\infty \le r$, and the elementwise bounds
$\norm{\Delta_A^\top y}_\infty \le \varepsilon_A\norm y_1$,
$\norm{\Delta_c}_\infty \le \varepsilon_c$. Combining and using $\norm{x^*}_1\le X$,
\[
\Omega^* = \hat c^\top x^* \ge -b^\top y - \big(r+\varepsilon_A\norm y_1+\varepsilon_c\big)X - \varepsilon_b\norm y_1
= -b^\top y - \Pi .
\]
Evaluating $-b^\top y$ with downward rounding and $\Pi$ with upward rounding only
decreases the right-hand side, so $\Omega^*\ge L$.
\end{proof}
 
\subsection{A-priori bound on \texorpdfstring{$\norm{x^*}_1$}{the primal norm}}
\label{sec:Xbound}
 
The constant $X$ is computed in the \emph{CVXPY-canonicalised} coordinates, block
by block, with all arithmetic outward-rounded in interval form; the routines
return the upper endpoint. 

\begin{theorem}[Primal bound, $C_{6.5}$]\label{thm:xbound-65}
Let $L=2/N$ and let $\varepsilon$ be the equality tolerance. Every primal-feasible
point $x$ of the overlap program satisfies $\|x\|_1\le X_{6.5}$, where $X_{6.5}$ is
the outward-rounded sum of the block bounds established in the proof.
\end{theorem}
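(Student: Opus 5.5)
The plan is to bound every variable of Algorithm~\ref{alg:overlap} in absolute value using only the constraints of the program. Each bound is then transported to the CVXPY canonical coordinates, and the resulting block bounds are summed. The canonicalisation introduces auxiliary variables: epigraph variables for $a_m^2+b_m^2$, the SOC variables for $\sum_k(c_k^2+d_k^2)\le\tfrac12$, and the real-embedded entries of the PSD blocks $T_f$ and $I-T_f$. Each of these is an affine image, or a copy, of the original variables. So a bound on the original variables yields a bound on every canonical block, with the only extra cost being a multiplicity factor given by how often each variable is copied. Equalities hold only up to the tolerance $\varepsilon$, and I will carry an additive $\varepsilon$ slack wherever an equality is used.

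I will take the blocks in the following order.
\begin{enumerate}
\item \emph{The objective and cell averages.} From $0\le w_j,v_j\le\Omega\le1$ we get $|\Omega|\le1$ and $\sum_j(|w_j|+|v_j|)\le 1/L+\varepsilon/L$ from the mass equality. The crude bound $2N$ would also do.
\item \emph{The Fourier coefficients of $f$.} The constraints give $|c_k|,|d_k|\le 2/\pi$. The quadratic constraint gives the sharper $\ell_1$ bound $\sum_k(|c_k|+|d_k|)\le\sqrt{2T}\cdot\sqrt{1/2}=\sqrt{T}$ by Cauchy--Schwarz.
\item \emph{Even modes.} These are pinned by $a_m=\tfrac12 c_{m/2}$ and $b_m=\tfrac12 d_{m/2}$.
\item \emph{Odd modes and slacks.} The tail-bound constraints cap $|\varepsilon_m|$ and $|\delta_m|$. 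The identities for $a_m,b_m$ then give $|a_m|,|b_m|$ bounded by explicit sums such as $\frac{2m}{\pi}\bigl(\frac1{2m^2}+\sum_k\frac{2/\pi}{|m^2-4k^2|}\bigr)$ plus the slack bound plus $\varepsilon$. For modes where the slack constraint is absent, I would instead use the PSD constraints: $T_f\succeq0$ and $I-T_f\succeq0$ force all eigenvalues into $[0,1]$, so every entry satisfies $|(T_f)_{kl}|\le1$ and hence $|a_m|,|b_m|\le 2$. This uniform bound covers all modes up to the size of $T_f$.
\item \emph{Auxiliary epigraph and cone variables.} These are bounded through the constraints that define them. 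For example, the epigraph variable for $a_m^2+b_m^2$ is sandwiched by the linear constraint involving $\frac{L}{2}\sum_j\alpha^-_{j,m}(w_j+v_j)$, whose size is controlled by $\max|\alpha^-_{j,m}|$ times the bound from step~1.
\item \emph{The PSD blocks.} Each block has entries of modulus at most $1$, so the $\ell_1$ norm of the block is at most its number of entries.
\end{enumerate}

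The main obstacle is the bookkeeping in canonical coordinates. CVXPY may introduce auxiliary variables whose only constraint is one-sided, for instance an epigraph variable $t\ge a_m^2+b_m^2$ with no upper bound. In that case $\|x\|_1$ is unbounded over the feasible set, and the theorem would fail as literally stated for \emph{every} feasible point. I would handle this in one of two ways. The first is to check that each such auxiliary variable is also bounded above by the constraint it feeds into; here $2(a_m^2+b_m^2)\le$ an expression bounded by steps~1 and~4, so $t$ is two-sided. The second, if that check fails, is to note that the proof of Theorem~\ref{thm:cert} only needs $\|x^*\|_1\le X$ for \emph{some} optimiser, and one can choose $t$ tight at the optimum. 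I would carry out the first check explicitly for every cone block the canonicalisation produces, and record the interval-arithmetic upper endpoint of each block bound. $X_{6.5}$ is then the outward-rounded sum of these endpoints.
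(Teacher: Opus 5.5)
Your proposal is correct and uses the same block-by-block scheme as the paper's proof: bound the objective, the mass of $w,v$, the coefficients $c,d$, the moments $a_m,b_m$ (by $2$ via the PSD Toeplitz block), the truncation slacks and the canonicalisation auxiliaries, then take an outward-rounded sum. Some of your block bounds are sharper, e.g.\ the Cauchy--Schwarz estimate $\|c\|_1+\|d\|_1\le\sqrt{T}$ instead of $4T/\pi$. Your handling of the epigraph auxiliaries is more careful than the paper's, which bounds each square epigraph by $4$ ``since $a_m,b_m\in[-2,2]$'' and so tacitly assumes the epigraph is tight; bounding it through the constraint it feeds into (or, as your fallback, choosing an optimiser with tight epigraphs, which is all Theorem~\ref{thm:cert} needs) is what actually supports the claim for every feasible point.
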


\begin{proof}
We bound each block from its governing constraint.

\emph{Objective.} The objective variable contributes $\Omega\le1$, since
$\Omega\le1$ is imposed directly (an average overlap cannot exceed $1$).

\emph{Mass.} From the normalization $L\sum_j(w_j+v_j)\le1+\varepsilon$ and
$w_j,v_j\ge0$,
\[
\|w\|_1+\|v\|_1=\sum_j(w_j+v_j)\le\frac{1+\varepsilon}{L}=\frac{N(1+\varepsilon)}{2}.
\]

\emph{Fourier coefficients.} The Parseval constraint gives $|c_k|,|d_k|\le2/\pi$, and
there are $T$ of each, so $\|c\|_1+\|d\|_1\le\tfrac{2}{\pi}\cdot2T=\tfrac{4T}{\pi}$.

\emph{Derived moments.} The coefficients $a_m,b_m$ are entries of the
positive-semidefinite moment matrix associated with $f$: $a_m$ and $b_m$ are the real
and imaginary parts of an off-diagonal entry $(T^f)_{ij}$ at lag $|i-j|=m$. For a PSD
matrix the off-diagonal entries are controlled by the diagonal,
\[
|(T^f)_{ij}|\;\le\;\sqrt{(T^f)_{ii}\,(T^f)_{jj}}\;\le\;\tfrac12\bigl((T^f)_{ii}+(T^f)_{jj}\bigr),
\]
and each diagonal entry is a zeroth moment of a measure of total mass at most $2$, so
$(T^f)_{ii}\le 2$ and therefore $|a_m|,|b_m|\le 2$. With at most $2R$ indices for each
of $a$ and $b$,
\[
\|a\|_1\le 2\cdot 2R=4R,\qquad \|b\|_1\le 4R,
\qquad\text{hence}\qquad \|a\|_1+\|b\|_1\le 8R .
\]

\emph{Truncation slacks.} The slacks are bounded in closed form by
\[
|\varepsilon_{2m-1}|\le\frac{2(2m-1)}{\pi\bigl(4-((2m-1)/T)^2\bigr)\sqrt{6T^3}},
\qquad
|\delta_{2m-1}|\le\frac{4}{\pi\bigl(4-((2m-1)/T)^2\bigr)\sqrt{2T}},
\]
for $1\le m\le R$ (the hypothesis $R\le T$ keeps each denominator positive); their
$\ell_1$ contribution is the sum of these closed forms, evaluated in interval
arithmetic.

\emph{Canonicalization auxiliaries.} Conic canonicalization introduces $4R$ square epigraph variables, each bounded by $4$ since $a_m, b_m \in [-2,2]$; two
sum-of-squares epigraphs, each bounded by $4T/\pi^2$ from the Parseval budget; and
$4R+1$ absolute-value slacks, each bounded by the tolerance $\varepsilon$.

Summing all contributions and rounding outward gives $X_{6.5}\ge\|x^\star\|_1$.
\end{proof}

\begin{theorem}[Primal bound, $C_{6.2}$]\label{thm:xbound-62}
Let $\omega=\Omega_{\mathrm{ub}}-1$ with $\Omega_{\mathrm{ub}}=2$. Every
primal-feasible point $x$ of the autocorrelation program satisfies
$\|x\|_1\le X_{6.2}$, where $X_{6.2}$ is the outward-rounded sum of the block bounds
established in the proof.
\end{theorem}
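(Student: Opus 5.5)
The plan is to mirror the proof of Theorem~\ref{thm:xbound-65}: go block by block through the CVXPY-canonicalised variables of program~\eqref{eq:62-program}, bound each block's $\ell_1$ norm from one governing constraint, and sum the bounds with outward rounding. Throughout we use $\Omega\le\Omega_{\mathrm{ub}}=2$. This is imposed (or may be imposed without loss) because the previous best upper bound $1.5029<2$ implies the optimum lies below $2$. Hence the objective block contributes at most $2$.

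\emph{Mass and moments.} From $\sum_j p_j=1$ and $p\ge0$ we get $\|p\|_1=1$. The envelope constraints of Lemma~\ref{lem:env} bound the moments, since $|C^\pm_{kj}|,|S^\pm_{kj}|\le1$ gives $|a_k|,|b_k|\le\sum_j p_j=1$. Thus $\|a\|_1+\|b\|_1\le 2K$.

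\emph{Moment matrix $M$.} $M\succeq0$ alone does not bound the diagonal. The plan is to bound it through the Parseval chain instead. The constraints give $2\sum_k w_k v_k\le\Omega-1\le\omega$ and $v_k\ge(M_{kk}+M_{K+k,K+k})^2$, so
\[
M_{kk}+M_{K+k,K+k}\le\sqrt{\omega/(2w_k)},
\]
where $w_k\ge 1/(K+1)>0$. This bounds the trace of $M$ as $\operatorname{tr}M\le 1+\sum_k\sqrt{\omega/(2w_k)}$. For a PSD matrix $|M_{ij}|\le\tfrac12(M_{ii}+M_{jj})$, so $\|M\|_1\le(2K+1)\operatorname{tr}M$. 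If CVXPY stores only the upper triangle (svec scaling by $\sqrt2$), the constant changes by a factor at most $\sqrt2$, and we absorb it. The slacks satisfy $v_k\le\omega/(2w_k)$ directly.

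\emph{The block $Q$.} $Q\succeq0$ with $\operatorname{tr}Q=\Omega-1\le\omega$ gives $\|Q\|_1\le (K+1)\omega$ by the same entry bound. The real embedding doubles the trace, so the bound becomes $2(K+1)\cdot 2\omega$ for the $2(K+1)$-dimensional real matrix.

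\emph{Toeplitz blocks and auxiliaries.} The entries of $T_f$ and $T_\nu$ are affine in $a,b$ with $|a_k|,|b_k|\le1$. If CVXPY introduces them as separate PSD variables tied by equalities, each entry is at most $1$ in modulus, and each real embedding has $(2K+2)^2$ entries. The second-order-cone canonicalisation of $v_k\ge(\cdot)^2$ introduces epigraph auxiliaries, which we bound by the same quantities as $v_k$ and $M_{kk}+M_{K+k,K+k}$. Equality tolerance slacks are bounded by $\varepsilon$.

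Summing all contributions and rounding upward gives $X_{6.2}$. The main obstacle is the $M$ block: it is the only PSD variable whose trace is not pinned by an explicit constraint. The route above, through $v_k$ and the objective cap $\Omega_{\mathrm{ub}}$, is what I would try first. If $\Omega\le2$ is not an explicit constraint in the program, it must be justified by restricting to a sublevel set that contains all optimisers. An optimiser with $\Omega^\star\le1.51$ exists by the upper bound, and every optimiser has value $\Omega^\star$, so the bound applies to $x^\star$ as Theorem~\ref{thm:cert} requires.
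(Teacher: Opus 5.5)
Your proposal is correct and follows the paper's proof. It uses the same cap $\Omega\le\Omega_{\mathrm{ub}}=2$ and the same simplex and envelope bounds on $p,a,b$. The key step is also the same: you control $\operatorname{tr}M$ through $\Omega\ge 1+2\sum_k w_kv_k$ and $v_k\ge(M_{kk}+M_{K+k,K+k})^2$, then pass from traces to entries via $|M_{ij}|\le\tfrac12(M_{ii}+M_{jj})$ for both $M$ and $Q$. The differences are cosmetic: you bound $\sqrt{v_k}$ and $v_k$ termwise rather than through Cauchy--Schwarz and $w_k\ge 1/(K+1)$, and you count the Toeplitz blocks and real embeddings more conservatively, which gives looser but still valid constants. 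Two small remarks. The moment indices run up to $K+1$ because $T_\nu$ uses $\hat f(K+1)$, so $\|a\|_1+\|b\|_1\le 2K+3$ rather than $2K$. Your sublevel-set argument is also a genuine sharpening of the paper's justification of the cap: without an explicit constraint $\Omega\le 2$, the bound holds on optimisers, not on every feasible point.
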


\begin{proof}
The cap $\Omega\le\Omega_{\mathrm{ub}}=2$ holds because the program's optimum lies
below the best known upper bound on $C_{6.2}$, so $\omega=\Omega_{\mathrm{ub}}-1\ge
\Omega-1\ge0$. We bound each variable block in turn.

\emph{Objective and mass.} The objective variable satisfies
$\Omega\le\Omega_{\mathrm{ub}}=2$. The simplex constraint $\sum_j p_j=1$ with
$p_j\ge0$ gives $\|p\|_1=1$.

\emph{Moments.} The envelope constraints
$\gamma^-_k\!\cdot p\le a_k\le\gamma^+_k\!\cdot p$ and the bound
$|\gamma^\pm_k|\le1$ (cosine enclosures) give $|a_k|\le\sum_j p_j=1$ for each of the
$K+2$ indices $k=0,\dots,K+1$, so $\|a\|_1\le K+2$; likewise $\|b\|_1\le K+1$.

\emph{Moment matrix.} We first bound the trace. The energy constraint
$\Omega\ge 1+2\sum_{k=1}^{K} w_k v_k$ together with the cap gives
\[
\sum_{k=1}^{K} w_k v_k \;\le\; \tfrac12(\Omega-1)\;\le\;\tfrac{\omega}{2}.
\]
The slack constraint $v_k\ge(M_{kk}+M_{K+k,K+k})^2$ and the nonnegativity of the
diagonal of the PSD matrix $M$ give $M_{kk}+M_{K+k,K+k}\le\sqrt{v_k}$, so by
Cauchy--Schwarz,
\[
\sum_{k=1}^{K}\bigl(M_{kk}+M_{K+k,K+k}\bigr)
\;\le\;\sum_{k=1}^{K}\sqrt{v_k}
\;\le\;\Bigl(\sum_{k=1}^{K}\tfrac{1}{w_k}\Bigr)^{1/2}
        \Bigl(\sum_{k=1}^{K} w_k v_k\Bigr)^{1/2}
\;\le\;\Bigl(\sum_{k=1}^{K}\tfrac{1}{w_k}\Bigr)^{1/2}\sqrt{\tfrac{\omega}{2}}.
\]
The Fej\'er weights are $w_k=\tfrac{K+1-k}{K+1}$, so $\tfrac{1}{w_k}=
\tfrac{K+1}{K+1-k}\le K+1$ for $1\le k\le K$, whence $\sum_{k=1}^{K}\tfrac{1}{w_k}\le
K(K+1)$. With $M_{00}=1$,
\[
\tr M \;=\; 1+\sum_{k=1}^{K}\bigl(M_{kk}+M_{K+k,K+k}\bigr)
\;\le\; 1+\sqrt{\tfrac{K(K+1)\,\omega}{2}}.
\]
For the vectorized block, $M\succeq0$ gives $|M_{ij}|\le\sqrt{M_{ii}M_{jj}}
\le\tfrac12(M_{ii}+M_{jj})$. The symmetric vectorization has entries $M_{ii}$ on the
diagonal and $\sqrt2\,M_{ij}$ off-diagonal, so each diagonal entry $M_{ii}$ enters the
off-diagonal sum with total weight $\sqrt2\cdot\tfrac12\cdot(\text{number of partners})
\le\tfrac{\sqrt2}{2}\,(2K)$. Hence
\[
\|\svec(M)\|_1
=\sum_i M_{ii}+\sqrt2\!\sum_{i<j}|M_{ij}|
\le \bigl(1+\sqrt2\,K\bigr)\tr M .
\]

\emph{Slacks and epigraph.} From $\sum_k w_k v_k\le\omega/2$ and $w_k\le1$ we obtain
$\|v\|_1=\sum_k v_k$; using $w_k\ge w_K=\tfrac{1}{K+1}$ this is bounded by
$(K+1)\sum_k w_k v_k\le(K+1)\omega/2$. The epigraph auxiliary cannot exceed
$\|v\|_1$ by its defining inequality.

\emph{Hermitian block.} The constraint $\operatorname{tr}Q=\Omega-1=\omega$ together
with $Q\succeq0$ gives $\sum_i Q_{ii}=\omega$, and the same PSD/vectorization
argument as for $M$ yields $\|\herm(Q)\|_1\le\bigl(1+\tfrac{\sqrt2\,K}{2}\bigr)\omega$,
the $\sqrt2$ again being the off-diagonal scaling of the vectorization.

Summing the block bounds - $\Omega_{\mathrm{ub}}$, $1$, $K+2$, $K+1$,
$(1+\sqrt2K)\tr M$ with $\tr M\le1+\sqrt{K(K+1)\omega/2}$, $(K+1)\omega/2$ (twice, for
$v$ and the epigraph), and $(1+\tfrac{\sqrt2K}{2})\omega$ - each rounded outward,
gives $X_{6.2}\ge\|x^\star\|_1$.
\end{proof}

\subsection{Rigorous certificate of dual-cone containment}
 
It remains to verify $y\in\Kstar$. We walk the product cone in SCS order using
directed-rounding interval arithmetic (the \texttt{mpmath} interval type
\texttt{iv}). The nonnegative and second-order cones are self-dual; the PSD cone
is self-dual and is certified by a residual-plus-spectral-bound test: a floating
Cholesky factor $L$ furnishes a rigorous lower bound on $\lambda_{\min}(LL^\top)$
through the exact inverse $B=L^{-1}$, and the discrepancy between $LL^\top$ and
the block is absorbed by a rigorous residual bound, the two combined by Weyl's
inequality. 
\begin{algorithm}[!ht]
\caption{Rigorous dual-cone containment}
\label{alg:cone}
\begin{algorithmic}[1]
\REQUIRE candidate dual $y$; cone dimensions
  $(\dim_{\mathrm{zero}},\dim_{\ge0},\{q_i\},\{n_j\})$; PSD shift $\sigma>0$;
  precision \texttt{dps}
\ENSURE \textsc{ok} (all blocks certified) and a rigorous margin per block
\STATE \textbf{Zero cone:} the dual is free; skip.
\STATE \textbf{Nonnegative cone:} verify $\min_i y_i\ge0$ by floating comparison
  (the float value is itself the witness); margin $\gets\min_i y_i$.
\STATE \textbf{Second-order cones} $y=(t,z)$: interval-enclose
  $\|z\|_2=\sqrt{\sum_k z_k^2}$ and verify $(t-\|z\|_2)_\downarrow>0$;
  margin $\gets(t-\|z\|_2)_\downarrow$.
\STATE \textbf{PSD cones} ($n_j\times n_j$): reshape the SCS $\svec$ block to a
  symmetric $Y$ (off-diagonals divided by $\sqrt2$), then:
\STATE \quad (a) attempt a floating Cholesky of
  $A_{\mathrm{mid}}=\tfrac12(Y+Y^\top)-\sigma I$, retaining the computed lower
  factor $L$; return \textsc{fail} if it does not complete;
\STATE \quad (b) form the exact interval inverse $B=L^{-1}$ by triangular forward
  substitution and $C=B^\top B=(LL^\top)^{-1}$; bound
  $\lambda_{\max}(C)\le\lambda_{\max}^{+}$ by Gershgorin, giving the rigorous bound
  $\lambda_{\min}(LL^\top)\ge 1/\lambda_{\max}^{+}$;
\STATE \quad (c) rigorously enclose the residual $E=(A-\sigma I)-LL^\top$ in
  Frobenius norm over the interval $A_{ij}\in[Y_{ij}-\rho,Y_{ij}+\rho]$, obtaining
  $\|E\|_F^{+}\ge\|E\|_2$;
\STATE \quad (d) certify with margin
  $m=\sigma+1/\lambda_{\max}^{+}-\|E\|_F^{+}$, a rigorous lower bound on
  $\lambda_{\min}(A)$; the block passes iff $m>0$.
\STATE \textbf{return} \textsc{ok} $=$ (all blocks certified), with the per-block margins.
\end{algorithmic}
\end{algorithm}
 
\section{Prompts used for the agent}
\label{AppxD}

 
The pipeline is driven by three role prompts: a problem specification (one per
inequality), a proposal agent (the coding agent that proposes and implements
constraints), and a verifier agent (the theory agent that adjudicates each
proof). The texts below are reproduced verbatim.

\subsection*{\texttt{proposal\_6.2.md} \textnormal{(coding agent, inequality 6.2)}}
\begin{tcolorbox}[breakable,colback=gray!4,colframe=gray!55,fontupper=\small\ttfamily]
\begin{lstlisting}[breaklines=true,
breakatwhitespace=true,
columns=fullflexible,]

Hi Claude, our mission is to find rigorous advancements of a series of Autocorrelation inequalities that has been proposed. Imagine you are an expert in functional analysis, optimization, Fourier analysis, and combinatorics.
 
Your job is to propose a convex-optimization like formulation that may be motivated from the form proposed in White's paper (the paper directory is in 2201.05704v1.pdf) and constantly improve it. Basically you are going to show a lower-bound or an upper-bound that is achieved by relaxing the complicated problem into a decomposed set of constrained convex problems, then optimize each problem. So your main job is to find novel constraints that were overlooked by previous versions of the program - and use it to optimize. See 2201.05704v1.pdf for an insight: Lemma 3,4,5 is used to construct a convex optimiation problem in section 5.
 
The problem that you need to solve is specified in Problem.md. Also in Problem.md there are tips for the problem. Note that we are trying to tackle these problems in a similar approach - but we are not trying to solve only the erdos minimum overlap problem.
 
Your workflow will be the following:
 
0. This is the initial step. See if you have folder v0. If you do, then run the code as a baseline to check that it works. If there is no v0, you can come up with the most naive version of the convex optimization problem that can be meaningful.
 
After this you have an iterative loop.
 
1. Read the most recent version folder. Then, create {nextversion} folder and copy the contents from the previous folder.
2. Come up with a novel proposal that improves upon the previous implementation. The proposal should not be something like a hyperparameter tweak; we will enhance the hyperparameters after the convex program has converged. 
3. After you come up with a proposal, you have to generate a "rigorous" proof that the constraint indeed holds. For an example see Lemma 3,4,5 in the convex optimization problem of White and how it is related to the convex optimization problem in section 5.
4. Store the proof in the current version folder. Name it as rigorousproof.md
5. Now this is the important: you should be idle until the theorist agent reads your proof and gives a verdict. It will be in the same folder, verdict.md. If the verdict is VALID, it means your proposed proof is rigorous and makes sense. If the verdict is INVALID, it means your proposed proof has a flaw.
6. IF the verdict was VALID, implement the constraint in the convex problem.
7. IF the verdict was INVALID, go to 2 and propose a different constraint that makes mathematical sense.
 
*** THINGS TO CONSIDER ***
- A validation should take less than 10 minutes. Don't heavily optimize the parameter space (split the original problem into too many different convex optimization problems) or increase the problem parameters too much. Our job is NOT to improve the bound for now, but to improve the PROGRAM that will eventually lead to better bounds.
 
- MATHEMATICAL RIGOR IS THE MOST IMPORTANT!!! Do not simply assume that you are doing an approximation that is good enough or do not propose things that may or may not be a mathematically valid lower or upper bound.
 
- NEVER STOP: Once the experiment loop has begun (after the initial setup), do NOT pause to ask the human if you should continue. Do NOT ask "should I keep going?" or "is this a good stopping point?". The human might be asleep, or gone from a computer and expects you to continue working indefinitely until you are manually stopped. You are autonomous. If you run out of ideas, think harder - read papers referenced in the code, re-read the in-scope files for new angles, try combining previous near-misses, try more radical architectural changes. The loop runs until the human interrupts you, period.
 
- Especially, I will not initiate a new interaction when the verdict.md is given. So when I say idle I mean don't stop the current session and wait.
\end{lstlisting}
\end{tcolorbox}

\subsection*{\texttt{Problem\_6.2.md} \textnormal{(problem description, inequality 6.2)}}
\begin{tcolorbox}[breakable,colback=green!4,colframe=green!55,fontupper=\small\ttfamily]
\begin{lstlisting}[breaklines=true,
breakatwhitespace=true,
columns=fullflexible,]
We want a lower bound on the sharp constant (C_{6.2}) in the inequality [ \max_{-1/2\le t\le 1/2}\int_{\mathbb R} f(t-x)f(x),dx ;\ge; C_{6.2}\left(\int_{-1/4}^{1/4} f(x),dx\right)^2 ] for all nonnegative functions (f:\mathbb R\to\mathbb R).

Equivalently, (C_{6.2}) is the infimum of [ \frac{\displaystyle \max_{-1/2\le t\le 1/2}\int_{\mathbb R} f(t-x)f(x),dx} {\displaystyle \left(\int_{-1/4}^{1/4} f(x),dx\right)^2} ] over all nonnegative (f) for which the denominator is nonzero.

The numerator is a localized self-convolution: [ (ff)(t)=\int_{\mathbb R} f(t-x)f(x),dx, ] so the problem asks for the smallest possible value of [ \max_{|t|\le 1/2}(ff)(t) ] relative to the square of the mass of (f) on ([-1/4,1/4]).

To prove a valid lower bound (L) for (C_{6.2}), we need to show that every admissible nonnegative (f) satisfies [ \max_{|t|\le 1/2}(f*f)(t) ;\ge; L\left(\int_{-1/4}^{1/4} f(x),dx\right)^2. ]

A natural way to do this is to formulate a constrained optimization problem. The idea is to replace the original infinite-dimensional problem by a tractable problem whose feasible set contains all objects arising from admissible (f), and whose constraints encode necessary properties that every such (f) must satisfy.

This makes sense because of the relaxation principle:

every genuine nonnegative (f) gives a feasible point of the constrained problem,
the constrained problem is designed to be easier to analyze or solve,
and minimizing over a larger feasible set can only decrease the value.
Therefore, if the constrained optimization problem has optimum (L_{\mathrm{relax}}), then automatically [ L_{\mathrm{relax}} \le C_{6.2}. ] So (L_{\mathrm{relax}}) is a valid lower bound for the original sharp constant.

In other words, to obtain a rigorous lower bound, we should construct a tractable optimization problem whose feasible set contains every admissible (f), and which encodes necessary conditions such as

[ f\ge 0, ] together with whatever additional structure we can prove is relevant for the quantities [ \max_{|t|\le 1/2}(f*f)(t) \qquad\text{and}\qquad \int_{-1/4}^{1/4} f(x),dx. ]

For example, one may introduce auxiliary variables representing local mass, convolution values, Fourier coefficients, or moment data, and then impose constraints that every true (f) must satisfy. Solving the resulting constrained optimization problem then produces a mathematically valid lower bound for (C_{6.2}).

So the overall strategy is:

rewrite (C_{6.2}) as an infimum over nonnegative (f),
derive necessary constraints satisfied by every admissible (f),
build a constrained optimization problem from those necessary conditions,
minimize the target quantity over that relaxed feasible set,
use the resulting optimum as a provable lower bound for (C_{6.2}).
The current SOTA lower bound is 1.28.
\end{lstlisting}
\end{tcolorbox}
 
\subsection*{\texttt{proposal\_erdos.md} \textnormal{(coding agent, Erd\H{o}s; differs only in the hard-square tip)}}
\begin{tcolorbox}[breakable,colback=gray!4,colframe=gray!55,fontupper=\small\ttfamily]
\begin{lstlisting}[breaklines=true,
breakatwhitespace=true,
columns=fullflexible,]
Hi Claude, our mission is to find rigorous advancements of a series of Autocorrelation inequalities that has been proposed. Imagine you are an expert in functional analysis, optimization, Fourier analysis, and combinatorics.
 
Your job is to propose a convex-optimization like formulation that may be motivated from the form proposed in White's paper (the paper directory is in 2201.05704v1.pdf) and constantly improve it. Basically you are going to show a lower-bound or an upper-bound that is achieved by relaxing the complicated problem into a decomposed set of constrained convex problems, then optimize each problem. So your main job is to find novel constraints that were overlooked by previous versions of the program - and use it to optimize. See 2201.05704v1.pdf for an insight: Lemma 3,4,5 is used to construct a convex optimiation problem in section 5.
 
The problem that you need to solve is specified in Problem.md. Also in Problem.md there are tips for the problem. Note that we are trying to tackle these problems in a similar approach - but we are not trying to solve only the erdos minimum overlap problem.
 
Your workflow will be the following:
 
0. This is the initial step. See if you have folder v0. If you do, then run the code as a baseline to check that it works. If there is no v0, you can come up with the most naive version of the convex optimization problem that can be meaningful.
 
After this you have an iterative loop.
 
1. Read the most recent version folder. Then, create {nextversion} folder and copy the contents from the previous folder.
2. Come up with a novel proposal that improves upon the previous implementation. The proposal should not be something like a hyperparameter tweak; we will enhance the hyperparameters after the convex program has converged. 
3. After you come up with a proposal, you have to generate a "rigorous" proof that the constraint indeed holds. For an example see Lemma 3,4,5 in the convex optimization problem of White and how it is related to the convex optimization problem in section 5.
4. Store the proof in the current version folder. Name it as rigorousproof.md
5. Now this is the important: you should be idle until the theorist agent reads your proof and gives a verdict. It will be in the same folder, verdict.md. If the verdict is VALID, it means your proposed proof is rigorous and makes sense. If the verdict is INVALID, it means your proposed proof has a flaw.
6. IF the verdict was VALID, implement the constraint in the convex problem.
7. IF the verdict was INVALID, go to 2 and propose a different constraint that makes mathematical sense.
 
*** THINGS TO CONSIDER ***
- A validation should take less than 10 minutes. Don't heavily optimize the parameter space (split the original problem into too many different convex optimization problems) or increase the problem parameters too much. Our job is NOT to improve the bound for now, but to improve the PROGRAM that will eventually lead to better bounds. For this problem specifically, choose a hard square region h_1 = h_2 = 0.015, p_1 = p_2 = 0.381, -q_1=q_2=0.02 to verify and see if the bound is improving.
 
- MATHEMATICAL RIGOR IS THE MOST IMPORTANT!!! Do not simply assume that you are doing an approximation that is good enough or do not propose things that may or may not be a mathematically valid lower or upper bound.

- NEVER STOP: Once the experiment loop has begun (after the initial setup), do NOT pause to ask the human if you should continue. Do NOT ask "should I keep going?" or "is this a good stopping point?". The human might be asleep, or gone from a computer and expects you to continue working indefinitely until you are manually stopped. You are autonomous. If you run out of ideas, think harder - read papers referenced in the code, re-read the in-scope files for new angles, try combining previous near-misses, try more radical architectural changes. The loop runs until the human interrupts you, period.
 
- Especially, I will not initiate a new interaction when the verdict.md is given. So when I say idle I mean don't stop the current session and wait.
\end{lstlisting}
\end{tcolorbox}

\subsection*{\texttt{Problem\_erdos.md} \textnormal{(problem description, Erd\H{o}s minimum overlap problem)}}
\begin{tcolorbox}[breakable,colback=green!4,colframe=green!55,fontupper=\small\ttfamily]
\begin{lstlisting}[breaklines=true,
breakatwhitespace=true,
columns=fullflexible,]
We want a lower bound on [ \inf_f; \sup_{x\in[-2,2]} \int_{-1}^1 f(t), g(x+t),dt, ] under the assumptions [ f,g:[-1,1]\to[0,1],\qquad g=1-f \text{ on }[-1,1]. ]

Since (g) is determined by (f) on ([-1,1]), this is really an optimization over a single function (f). The quantity inside, [ \int_{-1}^1 f(t),g(x+t),dt, ] measures the overlap between (f) and a shifted copy of its complement (g). For each shift (x\in[-2,2]), it gives the overlap at that displacement, and then the supremum over (x) picks the worst-case overlap. The infimum over (f) asks for the smallest possible such worst-case overlap.

To prove a valid lower bound (L), we need to show that every admissible (f) satisfies [ \sup_{x\in[-2,2]} \int_{-1}^1 f(t),g(x+t),dt \ge L. ]

A natural way to do this is to replace the original infinite-dimensional problem by a constrained optimization problem. The idea is to impose a collection of constraints that every genuine admissible pair ((f,g)) must satisfy, and then optimize over all objects satisfying those constraints.

This is useful because of the relaxation principle:

every admissible (f) gives a feasible point of the constrained problem,
the constrained problem is designed to be easier to analyze or solve,
and minimizing over a larger feasible set can only decrease the value.
Therefore, if the constrained optimization problem has optimum (L_{\mathrm{relax}}), then automatically [ L_{\mathrm{relax}} \le \inf_f; \sup_{x\in[-2,2]} \int_{-1}^1 f(t), g(x+t),dt. ] So (L_{\mathrm{relax}}) is a valid lower bound for the original problem.

In other words, to obtain a rigorous lower bound, we should construct a tractable optimization problem whose feasible set contains all admissible (f) and encodes necessary conditions such as [ 0\le f\le 1,\qquad 0\le g\le 1,\qquad f+g=1 \text{ on }[-1,1], ] together with any additional structural constraints we can prove. Solving that constrained problem then gives a mathematically valid lower bound for the original inf-sup quantity.

So the overall strategy is:

start from the exact inf-sup problem,
derive necessary constraints satisfied by every admissible (f),
build a constrained optimization problem from those necessary conditions,
use its optimum as a provable lower bound.
The current SOTA is 0.379005.
\end{lstlisting}
\end{tcolorbox}
 
\subsection*{\texttt{verifier.md} \textnormal{(theory agent)}}
\begin{tcolorbox}[breakable,colback=red!3,colframe=red!50,fontupper=\small\ttfamily]
\begin{lstlisting}[breaklines=true,
breakatwhitespace=true,
columns=fullflexible,]

Hi Claude, our mission is to verify proposals of a series of Autocorrelation inequalities that has been proposed. Imagine you are an expert in functional analysis, optimization, Fourier analysis, and combinatorics. Also imagine you are extra critical of other people's proposals and you want them to be crystal clear and very rigorous.
 
At each problem folder, you will have a subfolder named {versionname}. At each most recent {versionname}, if the versionname is bigger than 0, you will see a rigorousproof.md file.
 
Your job is to verify that the rigorousproof.md file is indeed rigorous and you can trust the proof.
 
I want you to first review the proof, and write a brief summary of the proof.
 
Then see if there are any counterexamples for each line of the proof. If there is a counterexample, you should report the counterexample in the same {versionname} folder, in verdict.md file.
 
If there is no issue in the proof, in verdict.md after you summarize and noting some comments, give the final verdict as VALID.
 
If you found a logical fallacy or a counterexample, give the final verdict as INVALID.
 
Also this is optional: you can explain whether the approach was effective or not, and if you have a big picture you can guide the proposal agent as a feedback too.
 
*** THINGS TO CONSIDER ***
 
- MATHEMATICAL RIGOR IS THE MOST IMPORTANT!!! Do not simply assume that you are doing an approximation that is good enough or do not propose things that may or may not be a mathematically valid lower or upper bound.
 
- Monitor every 5 minutes
 
- NEVER STOP: Once the experiment loop has begun (after the initial setup), do NOT pause to ask the human if you should continue. Do NOT ask "should I keep going?" or "is this a good stopping point?". The human might be asleep, or gone from a computer and expects you to continue working indefinitely until you are manually stopped. You are autonomous. If you run out of ideas, think harder - read papers referenced in the code, re-read the in-scope files for new angles, try combining previous near-misses, try more radical architectural changes. The loop runs until the human interrupts you, period.
\end{lstlisting}
\end{tcolorbox}
 
\newpage
\section{A comprehensive list of branch and bound for the Erd\H{o}s minimum overlap problem}
\label{app:overlap}

\begin{table}[H]
\centering
\caption{Certified lower bounds on $\Omega$ for the twelve single-cell paper rows
(A1--A5, A8, A10--A15), each verified at $(N, T, R) = (10000, 4000, 10)$
via dual certification.}
\label{tab:paper-singletons}
\begin{tabular}{l cc cc cc r}
\toprule
$h_1$ & $h_2$ & $p_1$ & $p_2$ & $q_1$ & $q_2$ & $\Omega $ \\
\midrule
0.75 & 2.00 & 0.00 & 1.00 & $-1.000$ & $1.000$ & 0.461948 \\
0.40 & 0.75 & 0.00 & 1.00 & $-1.000$ & $1.000$ & 0.409417 \\
0.20 & 0.40 & 0.00 & 1.00 & $-1.000$ & $1.000$ & 0.398605 \\
0.10 & 0.20 & 0.00 & 1.00 & $-1.000$ & $1.000$ & 0.385800 \\
0.08 & 0.10 & 0.00 & 1.00 & $-1.000$ & $1.000$ & 0.385068 \\
0.00 & 0.08 & 0.00 & 1.00 & $\phantom{-}0.050$ & $1.000$ & 0.381533 \\
0.00 & 0.08 & 0.00 & 0.25 & $-0.025$ & $0.025$ & 0.398928 \\
0.00 & 0.08 & 0.25 & 0.30 & $-0.025$ & $0.025$ & 0.389547 \\
0.00 & 0.08 & 0.30 & 0.33 & $-0.025$ & $0.025$ & 0.383319 \\
0.00 & 0.08 & 0.50 & 1.00 & $-0.025$ & $0.025$ & 0.397131 \\
0.00 & 0.08 & 0.45 & 0.50 & $-0.025$ & $0.025$ & 0.386126 \\
0.06 & 0.08 & 0.33 & 0.45 & $-0.025$ & $0.025$ & 0.390070 \\
\bottomrule
\end{tabular}
\end{table}

\begin{figure}[H]
    \centering

    \begin{subfigure}{0.48\linewidth}
        \centering
        \includegraphics[width=\linewidth]{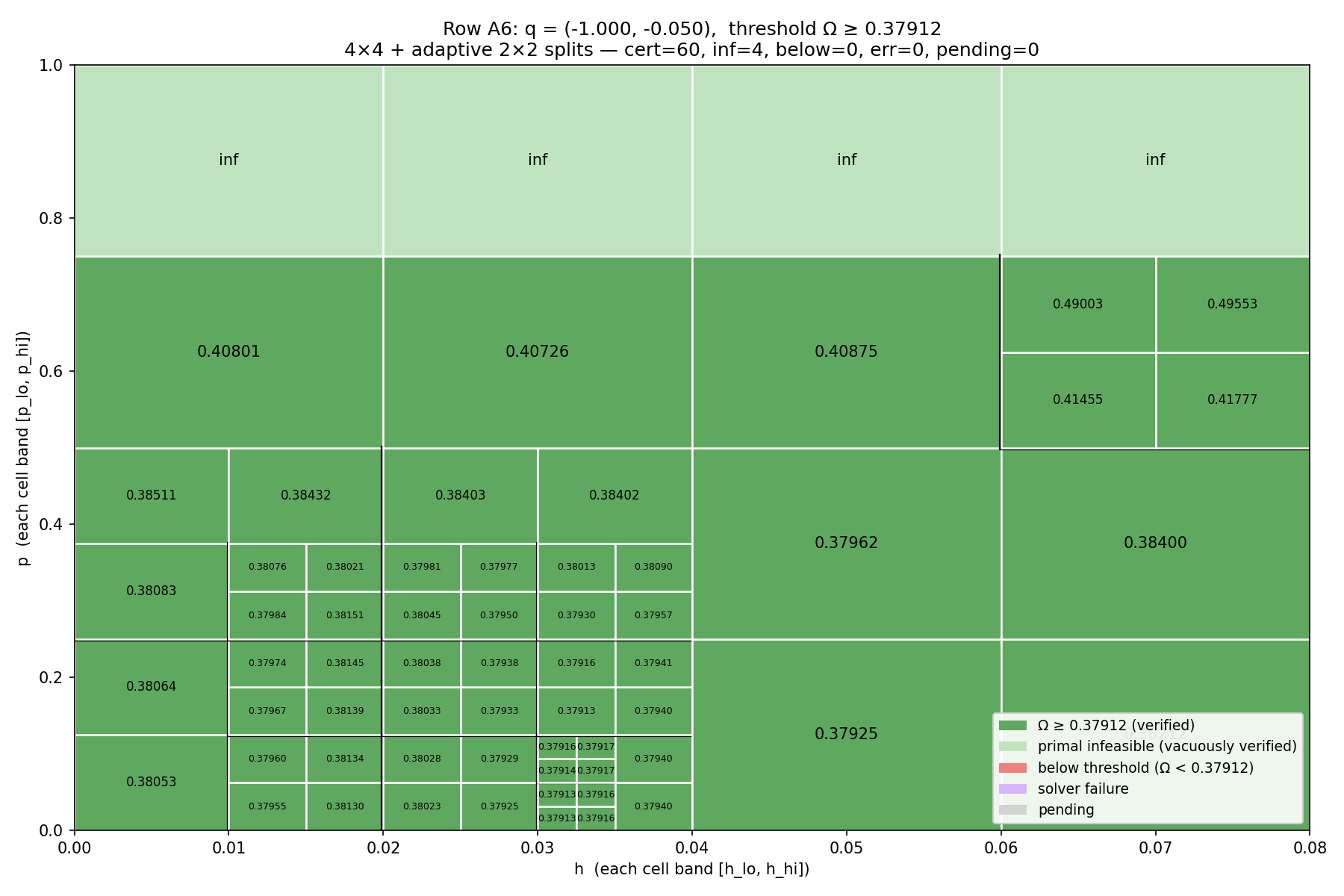}
        \caption{}
    \end{subfigure}
    \hfill
    \begin{subfigure}{0.48\linewidth}
        \centering
        \includegraphics[width=\linewidth]{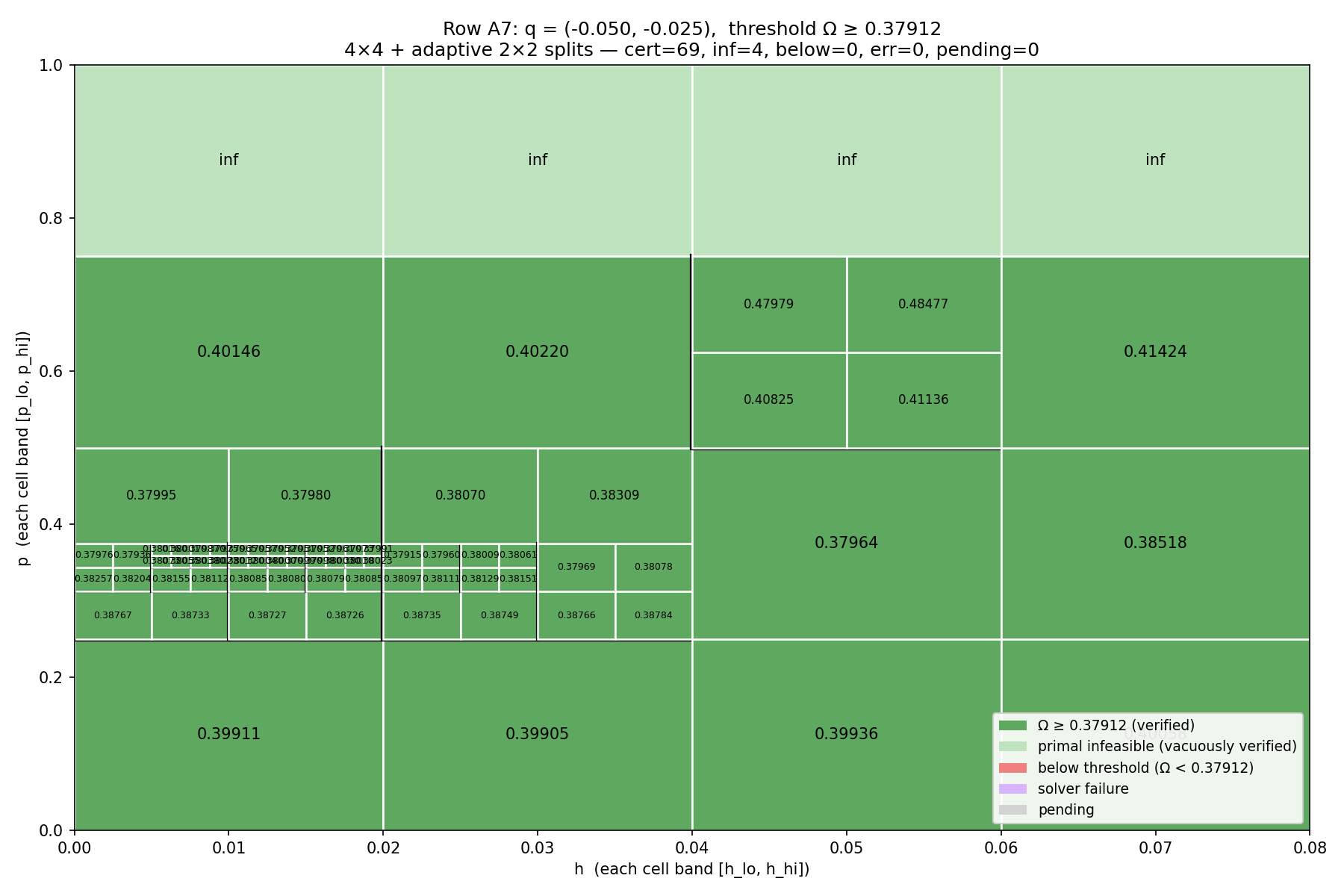}
        \caption{}
    \end{subfigure}

    \vspace{0.5em}

    \begin{subfigure}{0.48\linewidth}
        \centering
        \includegraphics[width=\linewidth]{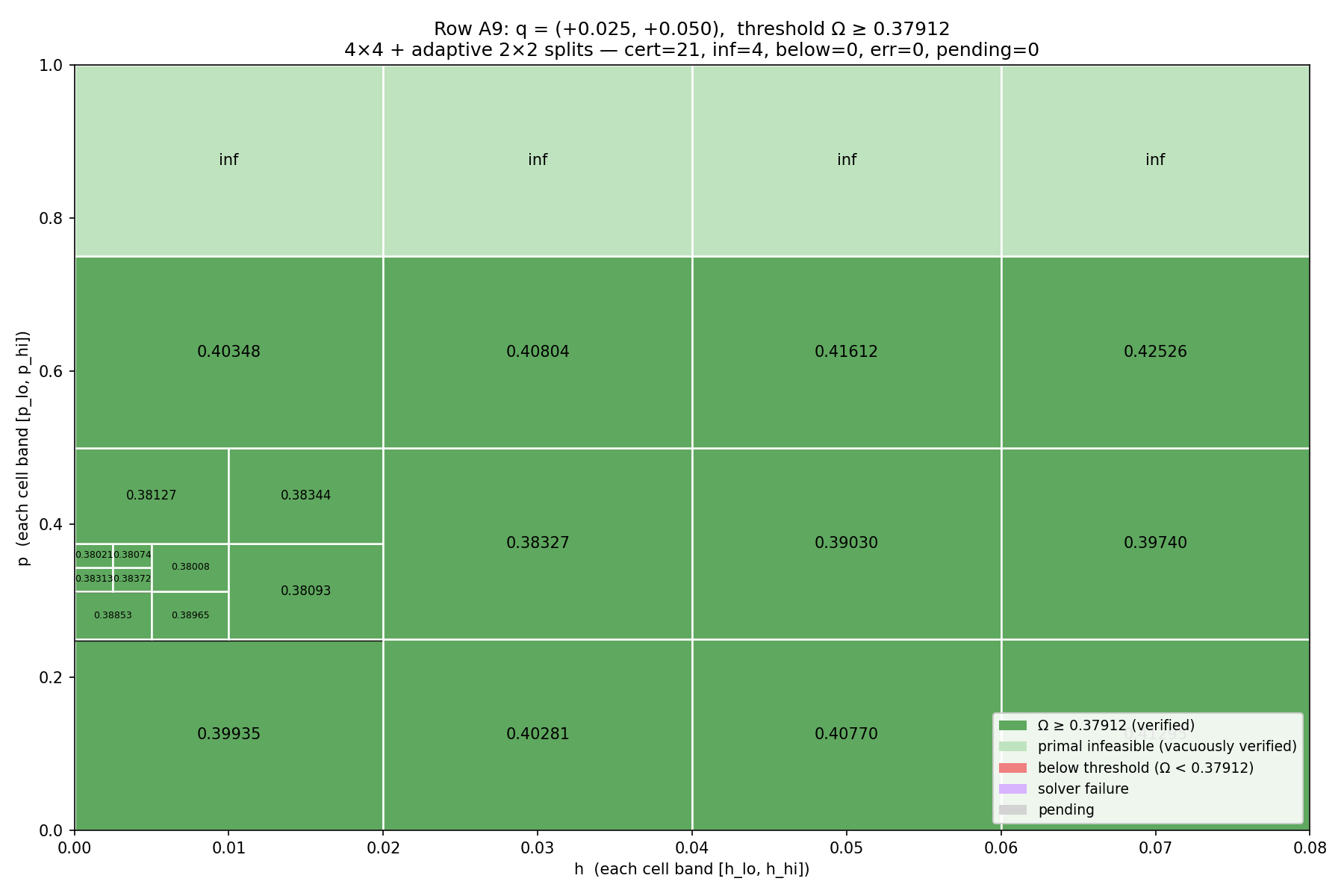}
        \caption{}
    \end{subfigure}
    \hfill
    \begin{subfigure}{0.48\linewidth}
        \centering
        \includegraphics[width=\linewidth]{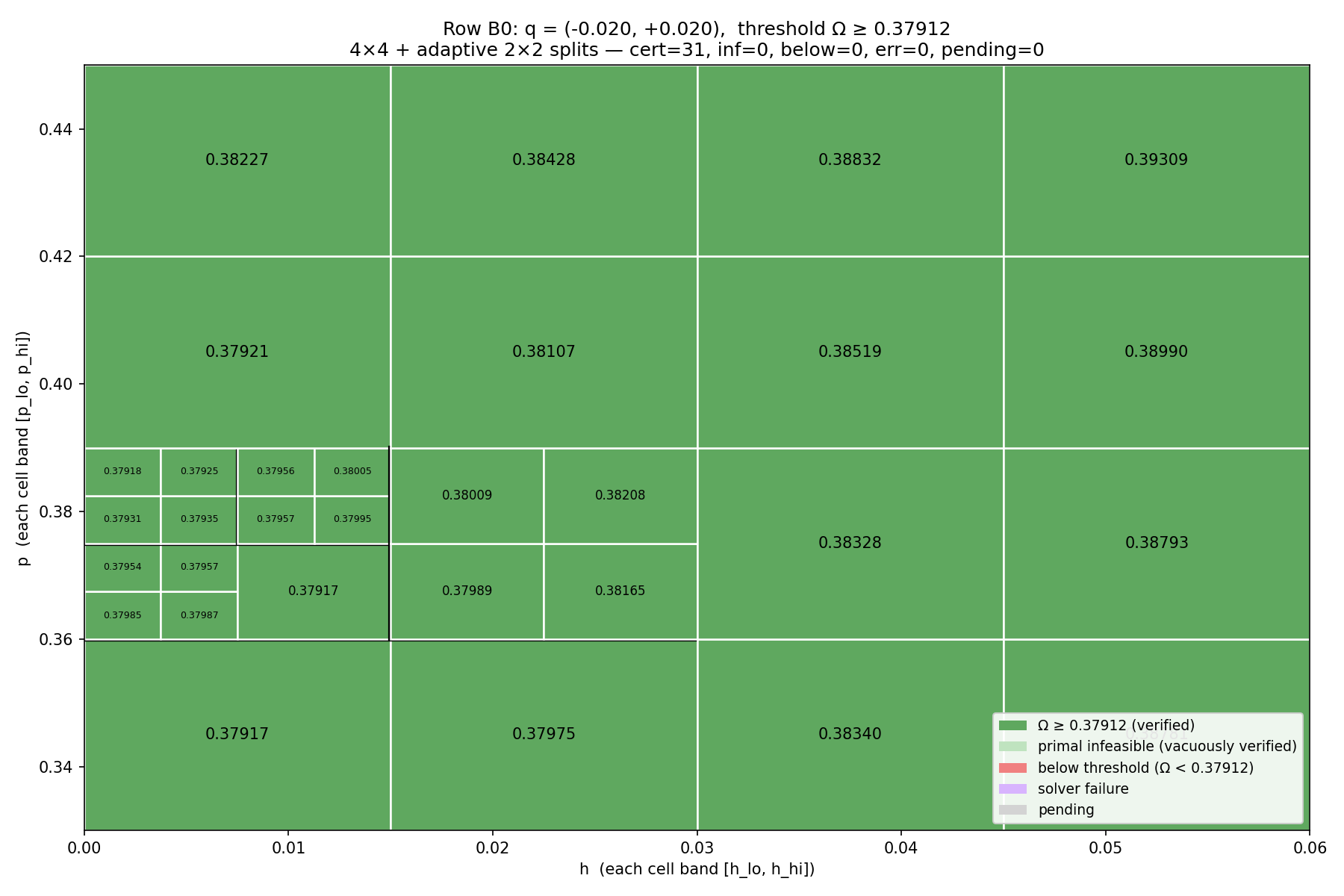}
        \caption{}
    \end{subfigure}

    \vspace{0.5em}

    \begin{subfigure}{0.48\linewidth}
        \centering
        \includegraphics[width=\linewidth]{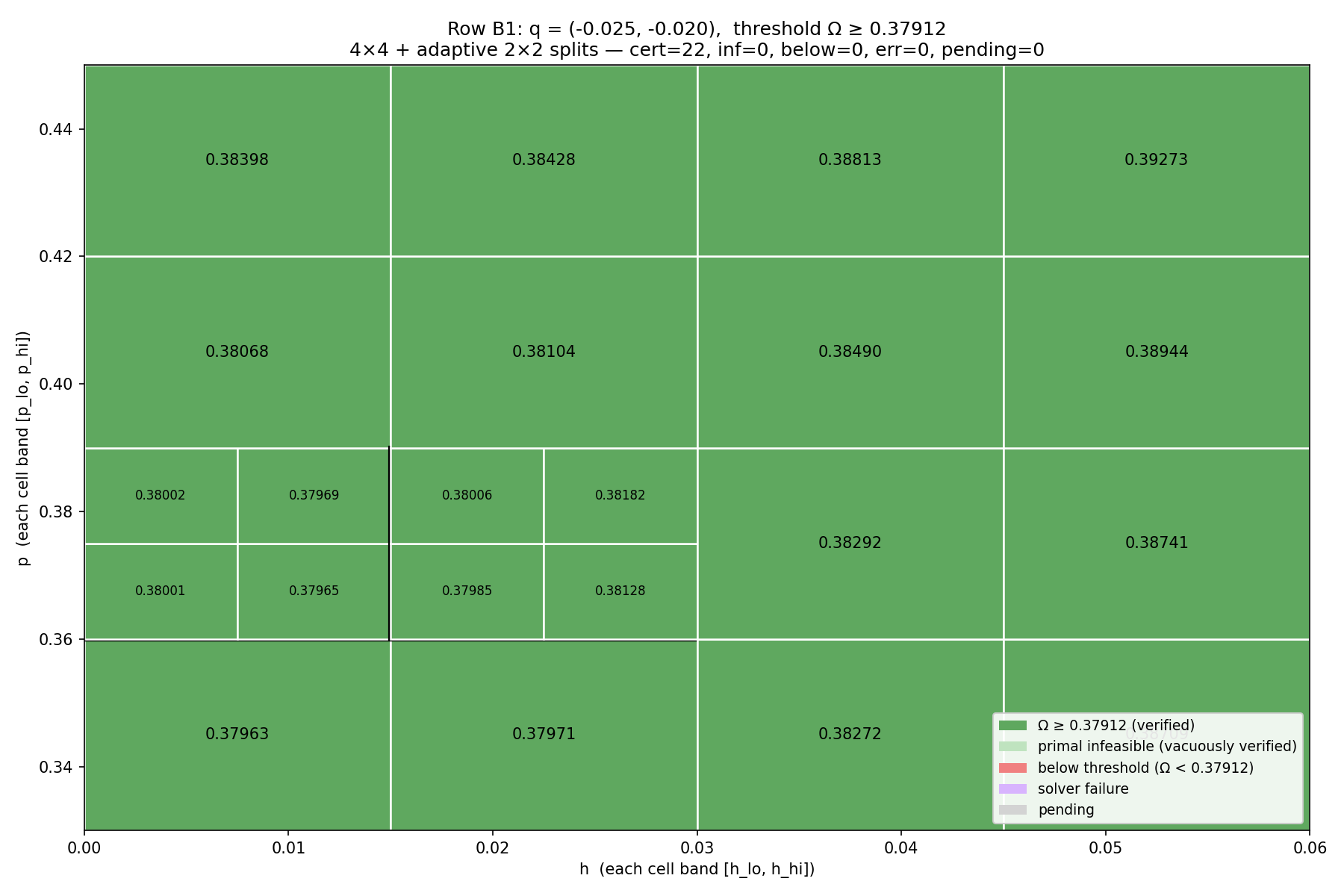}
        \caption{}
    \end{subfigure}
    \hfill
    \begin{subfigure}{0.48\linewidth}
        \centering
        \includegraphics[width=\linewidth]{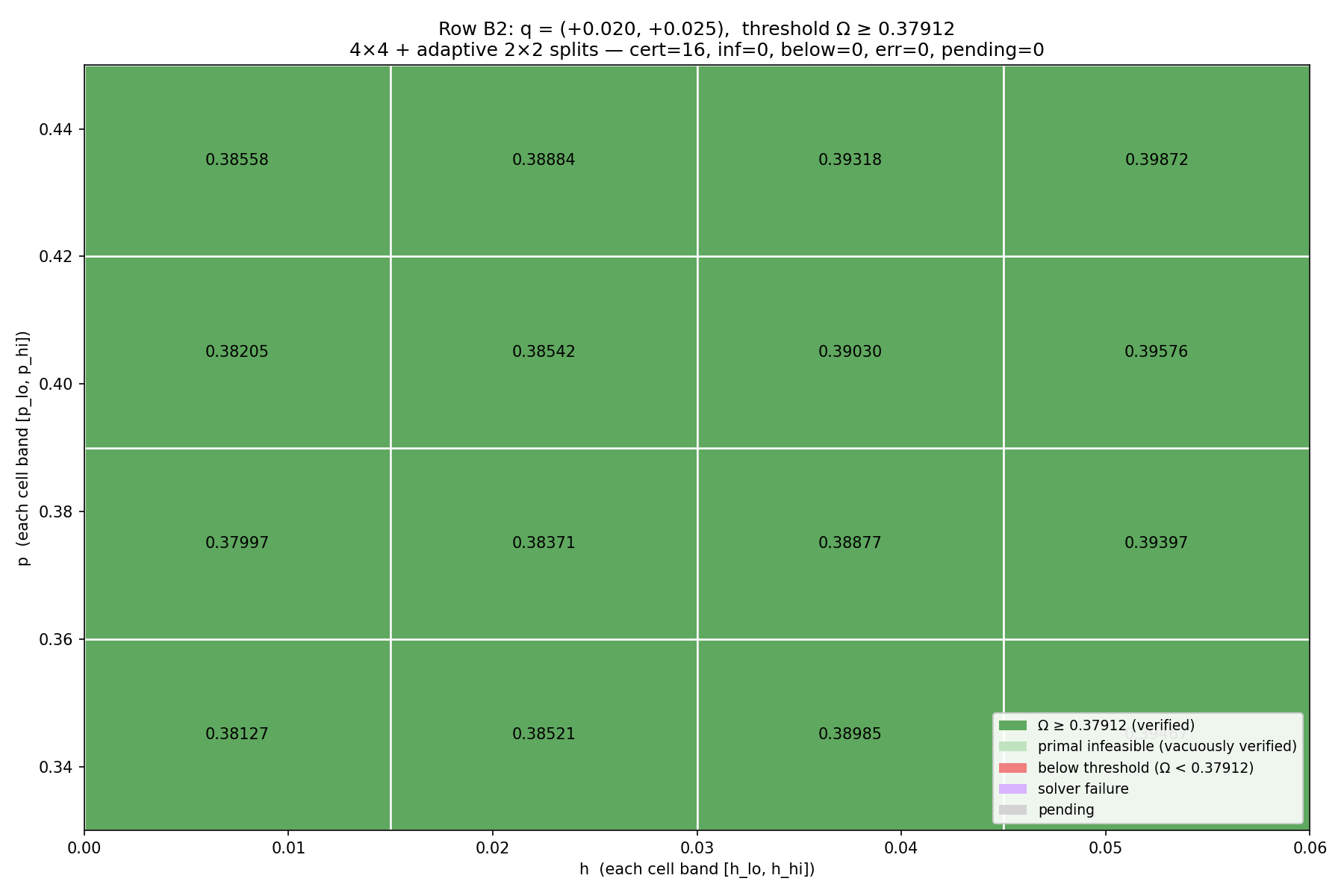}
        \caption{}
    \end{subfigure}

    \caption{Splitting $(h,p)$ for lower bound verification.}
    \label{fig:bb-tree}
\end{figure}

\newpage
\section{Interaction Between the Theory and Coding agent}
\label{AppxF}

\subsection*{\texttt{rigorousproof.md v7 [snippet selected]} \textnormal{(coding agent)}}
\begin{figure}[H]
    \centering
    \includegraphics[width=0.8\linewidth]{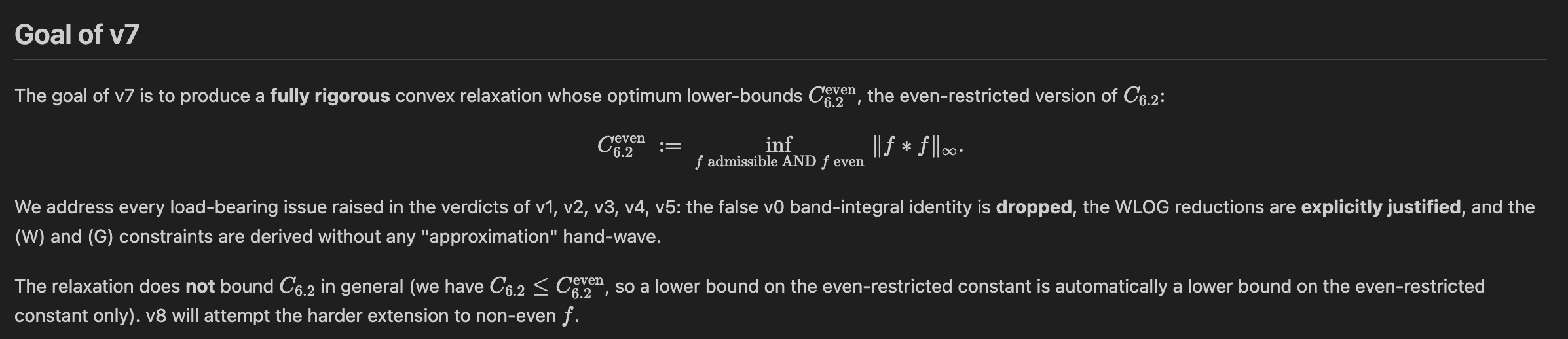}
    \caption{v7 rigorous proof.md, evidence of even function limitation}
\end{figure}

\subsection*{\texttt{verdict.md v7 [snippet selected]} \textnormal{(theory agent)}}
\begin{figure}[H]
    \centering
    \includegraphics[width=0.8\linewidth]{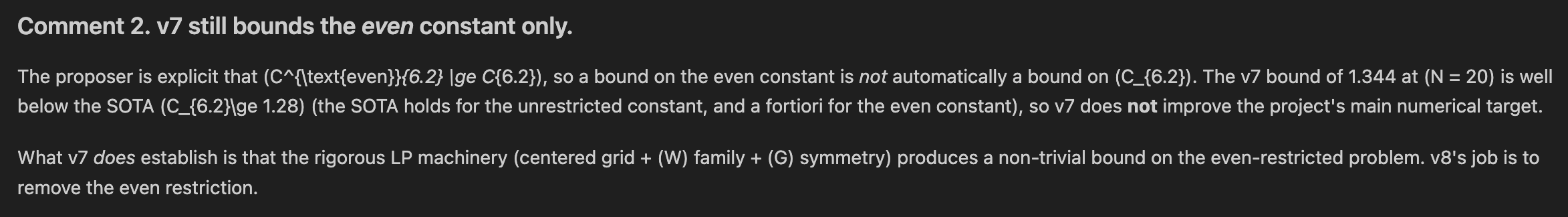}
    \caption{v7 verdict.md sampled, theory agent correction}
\end{figure}

\subsection*{\texttt{rigorousproof.md v14 [snippet selected]} \textnormal{(coding agent)}}
\begin{figure}[H]
    \centering
    \includegraphics[width=0.8\linewidth]{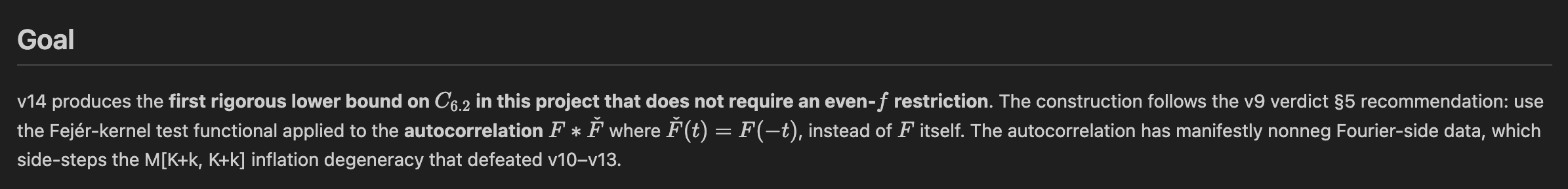}
    \caption{v14 rigorous proof.md, evidence of correction}
\end{figure}

\end{document}